\documentclass{article}
\usepackage{log_2022}            

\usepackage{booktabs}            % professional-quality tables
\usepackage{multirow}            % tabular cells spanning multiple rows
\usepackage{amsfonts}            % blackboard math symbols
\usepackage{graphicx}            % figures
\usepackage{duckuments}          % sample images

%%%%% NEW MATH DEFINITIONS %%%%%

\usepackage{amsmath,amsfonts,bm}

% Mark sections of captions for referring to divisions of figures

% Highlight a newly defined term

% Figure reference, lower-case.

% Figure reference, capital. For start of sentence

% Section reference, lower-case.

% Section reference, capital.

% Reference to two sections.

% Reference to three sections.

% Reference to an equation, lower-case.
\def\eqref#1{equation~\ref{#1}}
% Reference to an equation, upper case

% A raw reference to an equation---avoid using if possible

% Reference to a chapter, lower-case.

% Reference to an equation, upper case.

% Reference to a range of chapters

% Reference to an algorithm, lower-case.

% Reference to an algorithm, upper case.

% Reference to a part, lower case

% Reference to a part, upper case

\def\1{\bm{1}}

% Random variables

% rm is already a command, just don't name any random variables m

% Random vectors

% Elements of random vectors

% Random matrices

% Elements of random matrices

% Vectors

\def\vh{{\bm{h}}}

% Elements of vectors

% Matrix

% Tensor
\DeclareMathAlphabet{\mathsfit}{\encodingdefault}{\sfdefault}{m}{sl}
\SetMathAlphabet{\mathsfit}{bold}{\encodingdefault}{\sfdefault}{bx}{n}

% Graph

\def\gG{{\mathcal{G}}}

% Sets

% Don't use a set called E, because this would be the same as our symbol
% for expectation.

\def\sI{{\mathbb{I}}}

\def\sN{{\mathbb{N}}}

\def\sR{{\mathbb{R}}}
\def\sS{{\mathbb{S}}}

\def\sX{{\mathbb{X}}}
\def\sY{{\mathbb{Y}}}
\def\sZ{{\mathbb{Z}}}

% Entries of a matrix

% entries of a tensor
% Same font as tensor, without \bm wrapper

% The true underlying data generating distribution

% The empirical distribution defined by the training set

% The model distribution

% Stochastic autoencoder distributions

 % Laplace distribution

\newcommand{\R}{\mathbb{R}}

% Wolfram Mathworld says $L^2$ is for function spaces and $\ell^2$ is for vectors
% But then they seem to use $L^2$ for vectors throughout the site, and so does
% wikipedia.

 % See usage in notation.tex. Chosen to match Daphne's book.

\DeclareMathOperator*{\argmin}{arg\,min}

\usepackage{amsthm}

\newtheorem{theorem}{Theorem}[section]
\newtheorem{corollary}{Corollary}[section]
\newtheorem{lemma}[theorem]{Lemma}
\newtheorem{proposition}{Proposition}[section]
\newtheorem{definition}{Definition}[section]
\usepackage{graphicx}
\usepackage{subcaption}
\usepackage{multirow}
\usepackage{enumitem}
\usepackage{amssymb}
%\usepackage{xcolor,colortbl}         % colors

% If you want to use natbib:
\usepackage[numbers,compress,sort]{natbib}
%                                % for numerical citations
% \usepackage[sort,round]{natbib}
%                                % for textual citations

% If you want to use bibLaTeX, uncomment statements below:
% \usepackage[
%      backend=biber,
%      style=numeric-comp,
%      backref=true,
%      natbib=true]{biblatex}
% \addbibresource{reference.bib}

\title[Graph Neural Network with Local Frame for Molecular Potential Energy Surface]{Graph Neural Network with Local Frame for Molecular Potential Energy Surface}

\author[X. Wang et al.]{
Xiyuan Wang$^{1,2}$, Muhan Zhang$^{1,3}$\\
\institute{${}^1$Institute for Artificial Intelligence, Peking University}\\
\institute{${}^2$School of Intelligence Science and Technology, Peking University}\\
\institute{${}^3$Beijing Institute for General Artificial Intelligence}\\
\email{wangxiyuan@pku.edu.cn,muhan@pku.edu.cn}
}

\begin{document}

\maketitle

\begin{abstract}
Modeling molecular potential energy surface is of pivotal importance in science. Graph Neural Networks have shown great success in this field. However, their message passing schemes need special designs to capture geometric information and fulfill symmetry requirement like rotation equivariance, leading to complicated architectures. To avoid these designs, we introduce a novel \textit{local frame} method to molecule representation learning and analyze its expressivity. Projected onto a frame, equivariant features like 3D coordinates are converted to invariant features, so that we can capture geometric information with these projections and decouple the symmetry requirement from GNN design. Theoretically, we prove that given non-degenerate frames, even ordinary GNNs can encode molecules injectively and reach maximum expressivity with coordinate projection and frame-frame projection. In experiments, our model uses a simple ordinary GNN architecture yet achieves state-of-the-art accuracy. The simpler architecture also leads to higher scalability. Our model only takes about $30\%$ inference time and $10\%$ GPU memory compared to the most efficient baselines.
\end{abstract}

\section{Introduction}
Prediction of molecular properties is widely used in fields such as material searching, drug designing, and understanding chemical reactions~\citep{MLPES_intro}. Among properties, potential energy surface (PES)~\citep{PES_intro}, the relationship between the energy of a molecule and its geometry, is of pivotal importance as it can determine the dynamics of molecular systems and many other properties. Many computational chemistry methods have been developed for the prediction, but few can achieve both high precision and scalability. 

In recent years, machine learning (ML) methods have emerged, which are both accurate and efficient. Graph Neural Networks (GNNs) are promising among these ML methods. They have improved continuously~\citep{DTNN, SchNet, DirectionalMPNN, cormorant, EquiMPNN, GemNet, NequIP, EquiTransformer} and achieved state-of-the-art performance on many benchmark datasets. Compared with popular GNNs used in other graph tasks~\citep{GCN}, these models need special designs, as molecules are more than a graph composed of merely nodes and edges. Atoms are in the continuous 3D space, and the prediction targets like energy are sensitive to the coordinates of atoms. Therefore, GNNs for molecules must include geometric information. Moreover, these models should keep the symmetry of the target properties for generalization. For example, the energy prediction should be invariant to the coordinate transformations in \text{O}(3) group, like rotation and reflection.

All existing methods can keep the invariance. Some models~\citep{SchNet,DirectionalMPNN,GemNet} use hand-crafted invariant features like distance, angle, and dihedral angle as the input of GNN. Others use equivariant representations, which change with the coordinate transformations. Among them, some~\citep{TensorFieldNetwork, NequIP, cormorant} use irreducible representations of the SO$(3)$ group. The other models~\citep{EquiMPNN, EquiTransformer} manually design functions for equivariant and invariant representations. All these methods can keep invariance, but they vary in performance. Therefore, expressivity analysis is necessary. However, the symmetry requirement hinders the application of the existing theoretical framework for ordinary GNNs~\citep{HowPowerfulAreGNNs}.

By using the local frame, we decouple the symmetry requirement. As shown in Figure~\ref{fig::summary}, our model, namely \textit{GNN-LF}, first produces a frame (a set of bases of $\sR^3$ space) equivariant to $\text{O}(3)$ transformations. Then it projects the relative positions and frames of neighbor atoms on the frame as the edge features. Therefore, an ordinary GNN with no special design for symmetry can work on the graph with only invariant features. The expressivity of the GNN for molecules can also be proved using a framework for ordinary GNNs~\citep{HowPowerfulAreGNNs}. As the GNN needs no special design for symmetry, GNN-LF also has a simpler architecture and, thus, better scalability. Our model achieves state-of-the-art performance on the MD17 and QM9 datasets. It also uses only $30\%$ time and $10\%$ GPU memory than the fastest baseline on the PES task. 
\begin{figure}[t]
\centering
%\vspace{-5pt}
\includegraphics[width=0.9\textwidth]{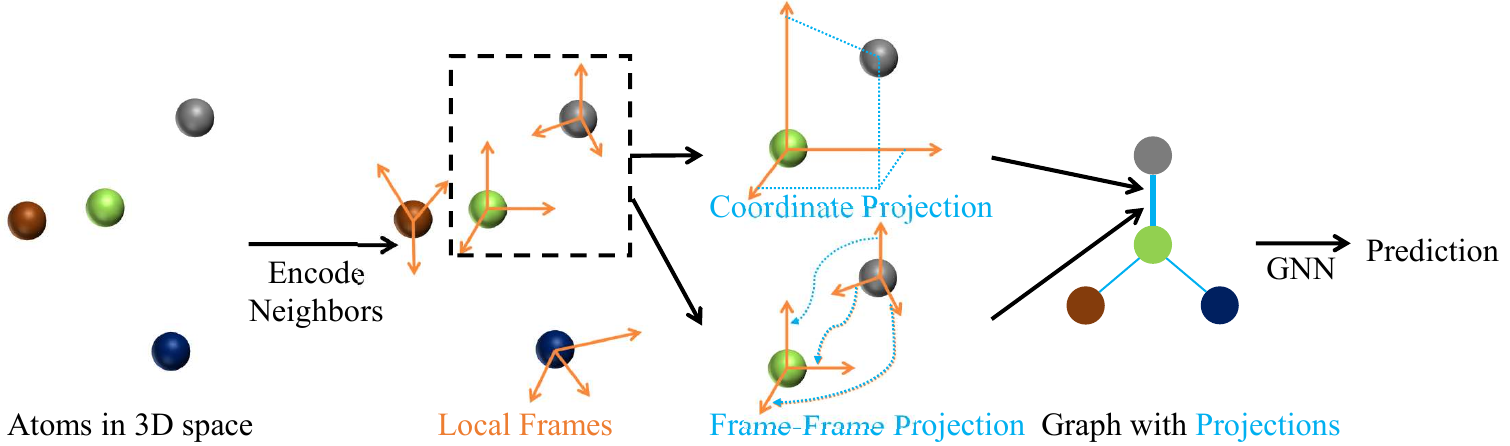}
\caption{An illustration of our model. One local frame is generated for each atom. Frames are used to transform geometric information into invariant representations. Then an ordinary GNN is applied.%\muhan{This figure can be improved. Use more professional atom illustrations, colors, and layout. You can check how Nature/Science papers draw figures.}
}\label{fig::summary}
%\vspace{-20pt}
\end{figure}

\section{Preliminaries}

\textbf{Ordinary GNN. }Message passing neural network (MPNN)~\citep{gilmer2017neural} is a common framework of GNNs. For each node, a message passing layer aggregates information from neighbors to update the node representations. The $k^{\text{th}}$ layer can be formulated as follows.
\begin{align}
\vh_v^{(k)}&=\text{U}^{(k)}(\vh_v^{(k-1)},\sum_{u\in N(v)}M^{(k)}(\vh_u^{(k-1)}, e_{vu}))
\end{align}
where $\vh_v^{(k)}$ is the representations of node $v$ at the $k^{\text{th}}$ layer, $N(v)$ is the set of neighbors of $v$, $\vh_v^{(0)}$ is the node $v$'s features, $e_{uv}$ is the features of edge $uv$, and $U^{(k)}, M^{(k)}$ are some functions. 

\citet{HowPowerfulAreGNNs} provide a theoretical framework for the expressivity of ordinary GNNs. One message passing layer can encode neighbor nodes injectively and then reaches maximum expressivity. With several message passing layers, MPNN can learn the information of multi-hop neighbors. %However, information loss will happen even if each layer has maximum expressivity.

\textbf{Modeling PES.} PES is the relationship between molecular energy and geometry. Given a molecule with $N$ atoms, our model takes the kinds of atoms $z\in \sZ^N$ and the 3D coordinates of atoms $\vec r\in \sR^{N\times 3}$ as input to predict the energy $\mathcal{\hat E}\in \sR$ of this molecule. It can also predict the force $\mathcal{\hat{\vec {F}}} \in \sR^{N\times 3}=-\nabla_{\vec r}\mathcal{\hat E}$. 

\textbf{Equivariance.} To formalized the symmetry requirement, we define equivariant and invariant functions as in \citep{ScalarUniverse}.
\begin{definition}
Given a function $h:\sX\to \sY$ and a group $G$ acting on $\sX$ and $\sY$ as $\star$. We say that $h$ is
\begin{align}
&G\text{-invariant:}    ~~~~~~~~\text{if}~h(g\star x)=h(x),    ~~\forall x\in \sX, g\in G\\
&G\text{-equivariant:}      ~~~~\text{if}~h(g\star x)=g\star h(x),   ~~\forall x\in \sX, g\in G
\end{align}
\end{definition}
The energy is invariant to the permutation of atoms, coordinates' translations, and coordinates' orthogonal transformations (rotations and reflections). GNN naturally keeps the permutation invariance. As the relative position $\vec r_{ij}=\vec r_{i}-\vec r_{j}\in \sR^{1\times 3}$, which is invariant to translation, is used as the input to GNNs, the translation invariance can also be ensured. So we focus on orthogonal transformations. Orthogonal transformations of coordinates form the group $\text{O}(3)=\{Q\in \sR^{3\times 3}~|~QQ^T=I\}$, where $I$ is the identity matrix. Representations are considered as \textbf{functions of $z$ and $\vec r$}, so we can define equivariant and invariant representations.
\begin{definition}\label{def::scalar_vector}
Representation $s$ is called an \textbf{invariant representation} if $s(z, \vec r)=s(z,\vec r o^T), \forall o\in \text{O}(3), z\in \sZ^N, \vec r\in \sR^{N\times 3}$. Representation $\vec v$ is called an \textbf{equivariant representation} if $\vec v(z, \vec r)o^T=\vec v(z, \vec ro^T), \forall o\in \text{O}(3), z\in \sZ^N, \vec r\in \sR^{N\times 3}$.
\end{definition}
Invariant and equivariant representations are also called scalar and vector representations respectively in some previous work~\citep{EquiMPNN}. 

\textbf{Frame} is a special kind of equivariant representation. Through our theoretical analysis, frame $\vec E$ is an orthogonal matrix in $\sR^{3\times 3}, \vec E\vec E^T=I$. GNN-LF generates a frame $\vec E_i\in \sR^{3\times 3}$ for each node $i$. We will discuss how to generate the frames in Section~\ref{sec::buildframe}. 

In Lemma~\ref{lem::operation}, we introduce some basic operations of representations.
\begin{lemma}\label{lem::operation}
~
%\vspace{-5pt}
\begin{itemize}[leftmargin=15pt]
\setlength\itemsep{0.5pt}
    \item Any function of invariant representation $s$ will produce an invariant representation.
    \item Let $s\in \sR^F$ denote an invariant representation, $\vec v \in \sR^{F\times 3}$ denote an equivariant representation. We define $s\cdot\vec v \in\sR^{F\times 3}$ as a matrix whose $(i, j)$th element is $s_i\vec v_{ij}$. When $\vec v\in \sR^{1\times 3}$, we first broadcast it along the first dimension. Then the output is also an equivariant representation.
    \item Let $\vec v\in\sR^{F\times 3}$ denote an equivariant representation. $\vec E\in \sR^{3\times 3}$ denotes an equivariant frame. The \textbf{projection} of $\vec v$ to $\vec E$, denoted as $P_{\vec E}(\vec v):=\vec v\vec E^T$, is an invariant representation in $\sR^{F\times 3}$. For $\vec v$, $P_{\vec E}$ is a bijective function. Its \textbf{inverse} $P_{\vec E}^{-1}$ convert an invariant representation $s\in \sR^{F\times 3}$ to an equivariant representation in $\sR^{F\times 3}$, $P_{\vec E}^{-1}(s)=s\vec E$. %??加入general projection??，不然实现的不变性会有疑问。
    \item Projection of $\vec v$ to a general equivariant representation $\vec v'\in \sR^{F'\times 3}$ is an invariant representation in $\sR^{F\times F'}$, $P_{\vec v'}(\vec v)=\vec v\vec v'^T$. 
\end{itemize}
\end{lemma}

\textbf{Local Environment.} 
Most PES models set a \textit{cutoff radius }$r_c$ and encode the \textit{local environment} of each atom as defined in Definition~\ref{def::localenv}.
\begin{definition}\label{def::localenv}
Let $r_{ij}$ denote $||\vec r_{ij}||$. The \textbf{local environment} of atom $i$ is $\text{LE}_i=\{(s_j, \vec r_{ij})| r_{ij} < r_c\}$, the set of invariant atom features $s_j$ (like atomic numbers) and relative positions $\vec r_{ij}$ of atoms $j$ within the sphere centered at $i$ with cutoff distance $r_c$, where $r_c$ is usually a hyperparameter. 
\end{definition}
%We just let the invariant feature $s_j$ be initialized with the atomic number $z_j$, but our analysis and model are not restricted to that. 
In this work, orthogonal transformation of a set/sequence means transforming each element in the set/sequence. For example, an orthogonal transformation $o$ will map $\{(s_j, \vec r_{ij})|r_{ij} < r_c\}$ to $\{(s_j , \vec r_{ij}o^T)|r_{ij} < r_c\}$.
\section{Related work}
We classify existing ML models for PES into two classes: manual descriptors and GNNs. GNN-LF outperforms the representative of each kind in experiments.

\textbf{Manual Descriptor.} These models first use manually designed functions with few learnable parameters to convert one molecule to a descriptor vector and then feed the vector into some ordinary ML models like kernel regression~\citep{ComloubMat, SGDML, fchl} and neural network~\citep{BP, ANI-1, DeepPot} to produce the prediction. These methods are more scalable and data-efficient than GNNs. However, due to the hard-coded descriptors, they are less accurate and cannot process variable-size molecules or different kinds of atoms. 

\textbf{GNN.} These GNNs mainly differ in the way to incorporate geometric information. 

\textit{Invariant models} use rotation-invariant geometric features only. \citet{DTNN} and \citet{SchNet} only consider the distance between atoms. \citet{DirectionalMPNN} introduce angular features, and \citet{GemNet} further use dihedral angles. Similar to GNN-LF, the input of the GNN is invariant. However, the features are largely hand-crafted and are not expressive enough, while our projections on frames are learnable and provably expressive. Moreover, as some features are of multiple atoms (for example, angle is a feature of three-atom tuple), the message passing scheme passes messages between node tuples rather than nodes, while GNN-LF uses an ordinary GNN with lower time complexity.

Recent works have also utilized equivariant features, which will change as the input coordinates rotate. Some~\citep{cormorant, TensorFieldNetwork, SE(3)Transformer, NequIP} are based on \textit{irreducible representations of the SO$(3)$ group}. Though having certain theoretical expressivity guarantees~\citep{TensorFieldProof}, these methods and analyses are based on polynomial approximation. High-order tensors are needed to approximate complex functions like high-order polynomials. However, in implementation, only low-order tensors are used, and these models' empirical performance is not high. Other works~\citep{EquiMPNN, EquiTransformer} model equivariant interactions in Cartesian space using both invariant and equivariant representations. They achieve good empirical performance but have no theoretical guarantees. Different sets of functions must be designed separately for different input and output types (invariant or equivariant representations), so their architectures are also complex. Our work adopts a completely different approach. We introduce O$(3)$-equivariant frames and project all equivariant features on the frames. The expressivity can be proved using the existing framework~\citep{HowPowerfulAreGNNs} and needs no high-order tensors.

\textbf{Frame models}. Some of existing methods~\citep{CoordNet, Frame_kofina} designed for other tasks also use frame to decouple the symmetry requirement. However, in conclusion, these methods differ significantly from ours in task, theory, and method as follows.
\begin{itemize}
    \item Most target properties of molecules are O(3)-equivariant or invariant (including energy and force). Our model can fully describe symmetry, while existing "frame" models cannot. For example, a molecule and its mirroring must have the same energy, and GNN-LF will produce the same prediction while existing models cannot keep the invariance. 
    \item Our theoretical analysis removes group representation used in \citep{TensorFieldProof,Frame1}.
    \item Existing models use some schemes not learnable to initialize frames and update them. GNN-LF uses a learnable message passing scheme to produce frames and will not update them, leading to simpler architecture and lower overhead.
\end{itemize}
The comparison is detailed in Appendix~\ref{app::existingframes}. 

\section{How frames boost expressivity?}\label{sec::expressivity}
Though symmetry imposes constraints on our design, our primary focus is expressivity. Therefore, we only discuss how the frame boosts expressivity in this section. Our methods, implementations, and how our model keeps invariance will be detailed in Section~\ref{sec::impl} and Appendix~\ref{app::keepingequiv}. We assume the existence of frames in this section and will discuss it in Section~\ref{sec::buildframe}. All proofs are in Appendix~\ref{app::proofs}.
\subsection{Decoupling symmetry requirement}
Though equivariant representations have been used for a long time, it is still unclear how to transform them ideally. Existing methods~\citep{EquiMPNN, EquiTransformer, VectorNeuron, ScalarUniverse} either have no theoretical guarantee or tend to use too many parameters. This section asks a fundamental question: can we use invariant representations instead of equivariant ones and keep expressivity? %If we can, the invariance is kept with no need for special design, and the implementation and theory of ordinary GNNs can be reused. 

Given any frame $\vec E$, the projection $P_{\vec E}(\vec x)$ will contain all the information of the input equivariant feature $\vec x$, because the inverse projection function can resume $\vec x$ from projection, $P_{\vec E}^{-1}(P_{\vec E}(\vec x))=\vec x$. Therefore, we can use $P_{\vec E}$ and $P_{\vec E}^{-1}$ to change the type (invariant or equivariant representation) of input and output of any function without information loss. %Moreover, frame allows us to express any equivariant function acting on equivariant representations with an invariant function acting on invariant representations, as shown in the following proposition.
\begin{proposition}\label{prop::equimap2invmap}
Given frame $\vec E$ and \textbf{any} equivariant function $g$, there exists a function $\tilde g=P_{\vec E}\circ g\circ P_{\vec E}^{-1}$ which takes invariant representations as input and outputs invariant representations, where $\circ$ is function composition. $g$ can be expressed with $\tilde g$: $g=P_{\vec E}^{-1}\circ \tilde g\circ P_{\vec E}$. 
\end{proposition}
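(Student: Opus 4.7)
The plan is to reduce the proposition to two ingredients already in hand: the bijectivity established in Proposition~\ref{prop::proj_bijective}, and the O$(3)$-equivariance of the frame $\vec E$ itself. Once $P_{\vec E}^{-1}$ exists as a genuine two-sided inverse, the composition $\tilde g = P_{\vec E}\cdot g\cdot P_{\vec E}^{-1}$ is automatically well-defined as a map on functions of $(z,\vec r)$, and the second identity $g = P_{\vec E}^{-1}\cdot \tilde g\cdot P_{\vec E}$ is then a one-line calculation using $P_{\vec E}^{-1}\cdot P_{\vec E}=\mathrm{id}$ and $P_{\vec E}\cdot P_{\vec E}^{-1}=\mathrm{id}$. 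I would dispatch that identity last and put all the weight on the first half of the claim.

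The substantive content is to verify that $\tilde g$ actually sends scalar representations to scalar representations. For this I would fix an arbitrary $o\in \mathrm{O}(3)$, substitute $\vec r o^T$ for $\vec r$ in every factor of $\tilde g(s)$, and track how each factor transforms. Since the rows of $\vec E$ are equivariant by construction, $\vec E \mapsto \vec E o^T$, hence $\vec E^T \mapsto o\vec E^T$, and, using $o^{-1}=o^T$, $\vec E^{-T}\mapsto \vec E^{-T} o^T$. Starting from a scalar input $s$, the inner inverse projection $P_{\vec E}^{-1}(s) = s\vec E^{-T}$ therefore picks up a right multiplication by $o^T$, which is precisely the defining transformation law of a vector representation in the sense of Definition~\ref{def::scalar_vector}. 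Applying the equivariant $g$ preserves this trailing $o^T$. The outer $P_{\vec E}$ then multiplies on the right by $\vec E^T$, which under $o$ becomes $o\vec E^T$; the $o^T$ and $o$ cancel, leaving $\tilde g(s)$ invariant and hence scalar-valued. The second identity follows immediately by expanding the two compositions.

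The main obstacle, such as it is, is careful bookkeeping of how the frame itself transforms: unlike a fixed coordinate system, $\vec E$ depends on $(z,\vec r)$, so neither $P_{\vec E}$ nor $P_{\vec E}^{-1}$ is a rigid operation on function spaces. The whole argument hinges on the cancellation of the two frame-induced $o$-factors introduced by the inner $\vec E^{-T}$ and the outer $\vec E^T$, and this cancellation is exactly what O$(3)$-equivariance of $\vec E$ is designed to guarantee. Once that observation is made explicit, the remainder is essentially notational, and no use is made of any property of $g$ beyond its equivariance.
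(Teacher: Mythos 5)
Your proposal is correct and follows essentially the same route as the paper: expand $\tilde g(s)=g\bigl(s(\vec E^{-1})^T\bigr)\vec E^T$, substitute $\vec r o^T$, use the O$(3)$-equivariance of $\vec E$ to get the trailing $o^T$ on the inner factor (making it a vector representation), push it through the equivariant $g$, and cancel it against the $o$ produced by the outer $\vec E^T$. The bookkeeping you describe, including $\vec E^{-T}\mapsto\vec E^{-T}o^T$, matches the paper's computation line for line, and the second identity is handled the same way via the two-sided inverse from Proposition~\ref{prop::proj_bijective}.
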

We can use a multilayer perceptron (MLP) to approximate the function $\tilde g$ and thus achieving \textbf{universal approximation} for all $\text{O}(3)$-equivariant functions. Proposition~\ref{prop::equimap2invmap} motivates us to transform equivariant representations to projections in the beginning and then fully operate on the invariant representation space. Invariant representations can also be transformed back to equivariant prediction with inverse projection operation if necessary.

\subsection{Projection boosts message passing layer}
The previous section discusses how projection decouples the symmetry requirement. This section shows that projections contain rich geometry information. Even ordinary GNNs can reach maximum expressivity with projections on frames, while existing models with hand-crafted invariant features are not expressive enough. The discussion is composed of two parts. Coordinate projection boosts the expressivity of one single message passing layer, and frame-frame projection boosts the whole GNN composed of multiple message passing layers. 

Note that in this section, we consider input $x_1$, $x_2$ (local environment or the whole molecule) as equal if they can interconvert with some orthogonal transformation ($\exists o\in \text{O(3)}, o(x_1)=x_2$), because the invariant representations and energy prediction are invariant under O(3) transformation. Therefore, injective mapping and maximum expressivity mean that function can differentiate inputs unequal in this sense. 

\textbf{Encoding local environment.}
Similar to that MPNN can encode neighbor nodes injectively on the graph, GNN-LF can encode neighbor nodes injectively in 3D space. Other models can also be analyzed from an encoding local environments perspective. GNNs for PES only collect messages from atoms within the sphere of radius $r_c$, so one message passing layer of them is equivalent to encoding the local environments in Definition~\ref{def::localenv}. When mapping local environments \textit{injectively}, a single message passing layer reaches maximum expressivity. 

Some popular models are under-expressive. For example, as shown in Figure~\ref{subfig::schnetexample}, SchNet~\citep{SchNet} only considers the distance between atoms and neglects the angular information, leading to the inability to differentiate some simple local environments. Moreover, Figure~\ref{subfig::dimenetexample} illustrates that though DimeNet~\citep{DirectionalMPNN} adds angular information to message passing, its expressivity is still limited, which may be attributed to the loss of high-order geometric information like dihedral angle. 

In contrast, no information loss will happen when we use the coordinates projected on the frame. 
\begin{theorem}\label{thr::encode_localenv}
There exists a function $\chi$. Given a frame $\vec E_i$ of the atom $i$, $\chi$ encodes the local environment of atom i injectively into atom i's embeddings.
\begin{equation}
\chi(\{(s_j, \vec r_{ij})| r_{ij}<r_c\})=\rho(\sum_{r_{ij}<r_c} \psi(\text{Concatenate}(P_{\vec E_i}(\vec r_{ij}),s_j))). %??删除concatenate
\end{equation}
\end{theorem}
Theorem~\ref{thr::encode_localenv} shows that an ordinary message passing layer can encode local environments injectively with coordinate projection as an edge feature. 
\begin{figure}[t]
%\vspace{-5pt}
\centering
\begin{subfigure}{0.26\textwidth}
\includegraphics[width=\textwidth]{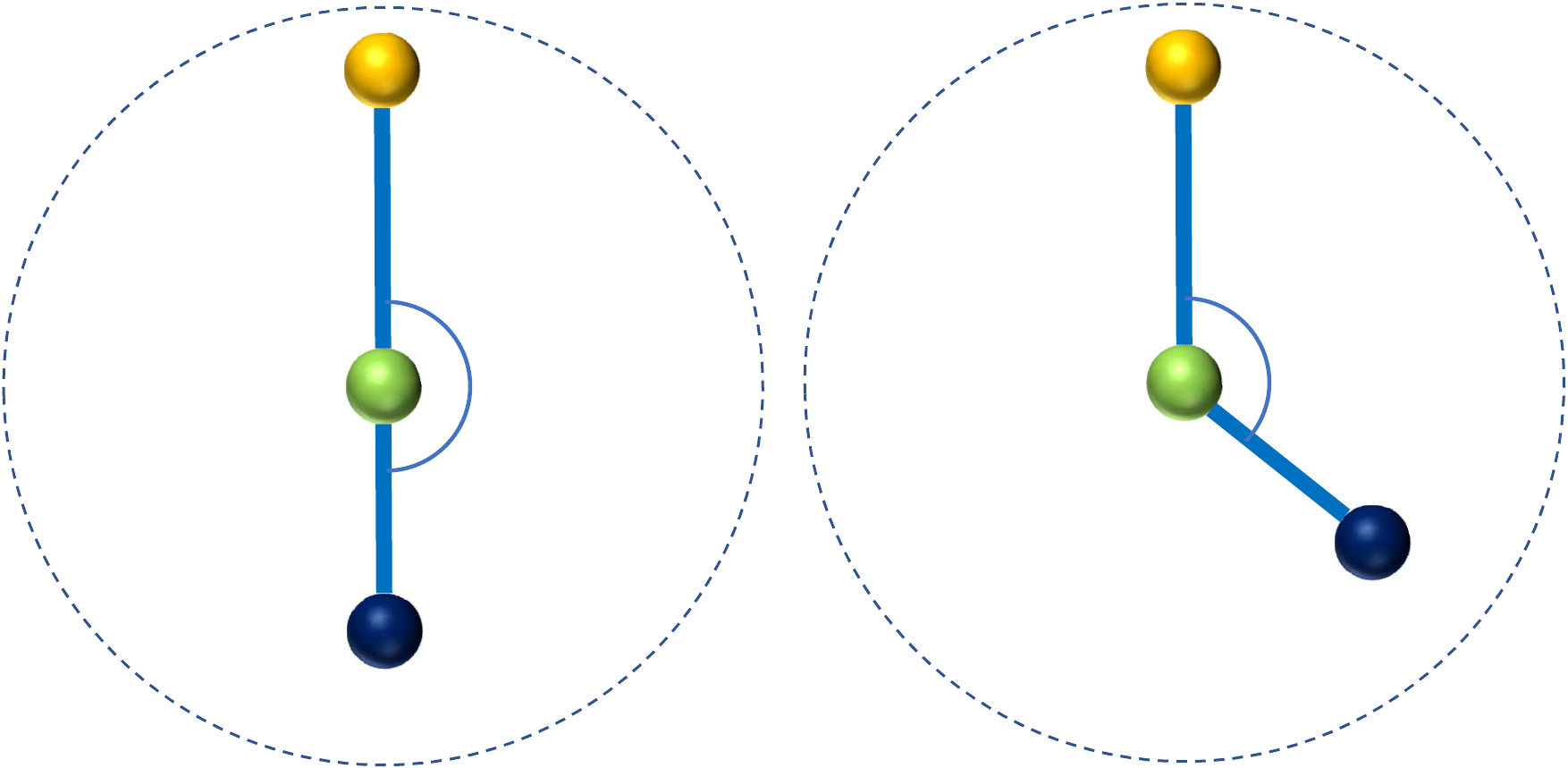}
\caption{}
\label{subfig::schnetexample}
\end{subfigure}
\hspace{5pt}
\begin{subfigure}{0.26\textwidth}
\includegraphics[width=\textwidth]{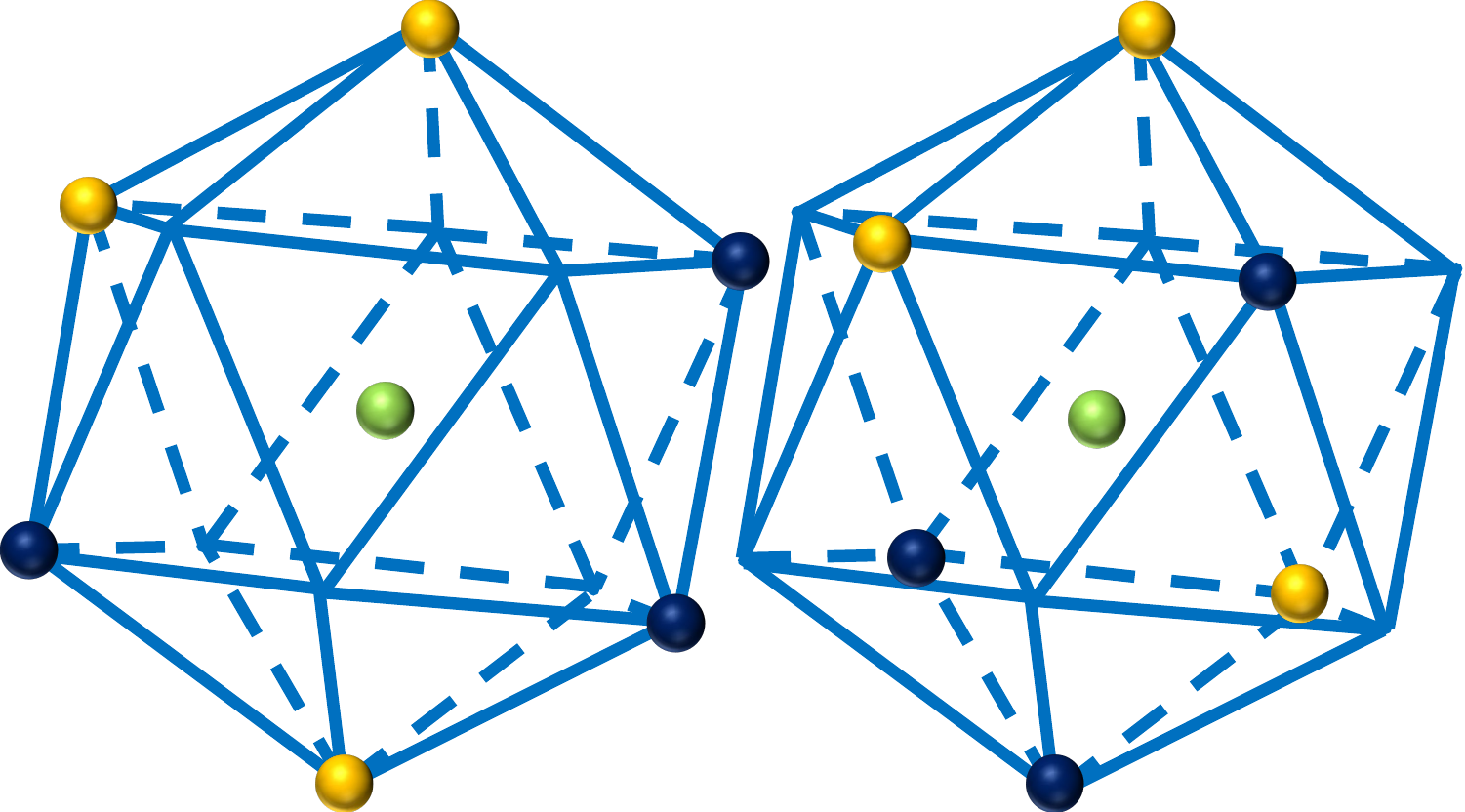}
\caption{}
\label{subfig::dimenetexample}
\end{subfigure}
\hspace{5pt}
\begin{subfigure}{0.4\textwidth}
\includegraphics[width=\textwidth]{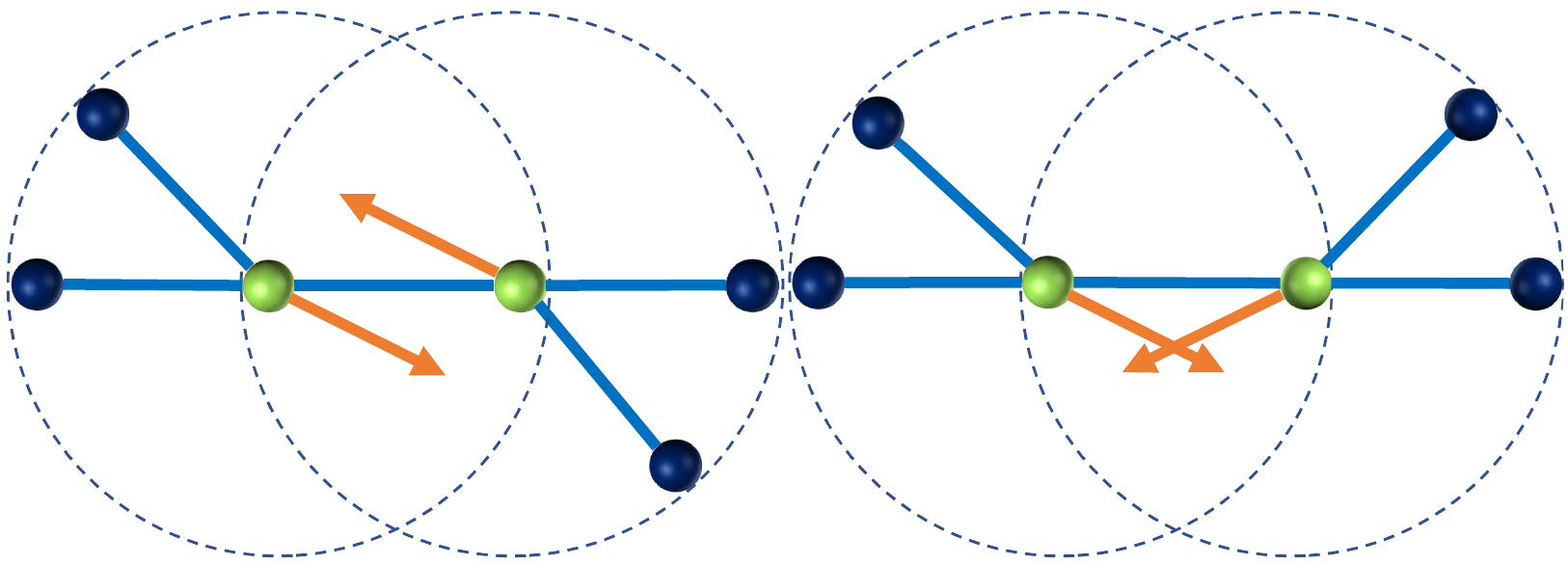}
\caption{}
\label{fig::mpacrosslocal}
\end{subfigure}
\caption{The green balls in the figure are the center atoms. We use balls with different colors to represent different kinds of atoms. (a) SchNet cannot distinguish two local environments due to the inability to capture angle. (b) DimeNet cannot distinguish two local environments with the same set of angles. Blue lines form a regular icosahedron and help visualization. The center atom is at the symmetrical center of the icosahedron. (c)  Invariant models fail to pass the orientation information, while the projection of frame vectors can solve this problem. For simplicity, we only show one vector (orange) to represent the frame.}
%\vspace{-15pt}
\end{figure}
%?? global frame 完全不存在这一问题，local frame

\textbf{Passing messages across local environments.} 
In physics, interaction between distant atoms is usually not negligible. Using one single message passing layer, which encodes atoms within cutoff radius only, leads to loss of such interaction. When using multiple message passing layers, GNN can pass messages between two distant atoms along a path of atoms and thus model the interaction. 

However, passing messages in multiple steps may lead to loss of information. For example, in Figure~\ref{fig::mpacrosslocal}, two molecules are different as a part of the molecule rotates. However, the local environment will not change. So the node representations, the messages passed between nodes, and finally, the energy prediction will not change while two molecules have different energy. This problem will also happen in previous PES models~\citep{SchNet, DirectionalMPNN}. Loss of information in multi-step message passing is a fundamental and challenging problem even for ordinary GNN~\citep{HowPowerfulAreGNNs}. 

Nevertheless, the solution is simple in this special case. We can eliminate the information loss by \textit{frame-frame projection}, i.e., projecting $\vec E_i$ (the frame of atom $j$) on $\vec E_j$ (the frame of atom $i$). For example, in Figure~\ref{fig::mpacrosslocal}, as the molecule rotates, frame vectors also rotate, leading to frame-frame projection change, so our model can differentiate them. We also prove the effectiveness of frame-frame projection in theory.
\begin{theorem}\label{thr::mpacrosslocalenvironment}
Let $\gG$ denote the graph in which node $i$ represents the atom $i$ and edge $ij$ exists iff $r_{ij}<r_c$, where $r_c$ is the cutoff radius. Assuming frames exist, if $\gG$ is a connected graph whose diameter is $L$, GNN with $L$ message passing layers as follows can encode the whole molecule $\{(s_j, \vec r_{ij})|j\in \{1,2,...,n\}\}$ injectively into the embedding of node $i$. 
\begin{equation}\label{equ::mpwithffproj}
\chi(\{(s_j, \vec r_{ij}, \vec E_j)| r_{ij}<r_c\})=\rho(\sum_{r_{ij}<r_c} \psi(\text{Concatenate}(P_{\vec E_i}(\vec r_{ij}), P_{\vec E_i}(\vec E_{j}), s_j)).
\end{equation}
\end{theorem}
Theorem~\ref{thr::mpacrosslocalenvironment} shows that an ordinary GNN can encode the whole molecule injectively with coordinate projection and frame-frame projection as edge features. 

In conclusion, when frames exist, \textbf{even ordinary GNN can encode molecule injectively and thus reach maximum expressivity with coordinate projection and frame-frame projection.}

\section{How to build a frame?}\label{sec::buildframe}
We propose frame generation method after discussing how to use frames because generation method's connection to expressivity is less direct. Whatever frame generation method is used, GNN-LF can keep expressivity as long as the frame does not degenerate. A frame degenerates iff it has less than three linearly independent vectors. This section provides one feasible frame generation method.

A straightforward idea is produce frames using invariant features of each atom, like the atomic number. However, function of invariant features must be invariant representations rather than equivariant frames. Therefore, we consider producing the frame from the local environment of each atom, which contains equivariant 3D coordinates. In Theorem~\ref{thm::best_agg}, we prove that there exists a function mapping the local environment to an O$(3)$-equivariant frame.
\begin{theorem}\label{thm::best_agg}
There exists an O$(3)$-equivariant function $g$ mapping the local environment $\text{LE}_i$ to an equivariant representation in $\sR^{3\times 3}$. The output forms a frame if $\forall o\in \text{O}(3), o\neq I,o(\text{LE}_i)\neq\text{LE}_i$.
\end{theorem}
Proof is in Appendix~\ref{app::proofprop::best_agg}. Frames produced by the function in Theorem~\ref{thm::best_agg} will not degenerate if local environments have no symmetry elements, like inversion centers, rotation axes, or mirror planes.

Building a frame for a symmetric local environment remains a problem in our current implementation but will \textbf{not seriously hamper our model}. Firstly, our model can produce reasonable output even with symmetric input and is provably more expressive than a widely used model SchNet~\citep{SchNet} (see Appendix~\ref{app::syminput}). Secondly, symmetric molecules are rare and form a zero-measure set. In our two representative real-world datasets, less than $0.01\%$ of molecules (about ten molecules in the whole datasets of several hundred thousand molecules) are symmetric. Thirdly, symmetric geometry may be captured with a degenerate frame. As shown in Figure~\ref{fig::water}, water is a symmetric molecule. We can use a frame with one vector to describe its geometry. Based on node identity features and relational pooling~\citep{FrameAveraging}, we also propose a scheme in Appendix~\ref{app::relpool} to completely solve the expressivity loss caused by degeneration. However, for scalability, we do not use it in GNN-LF.

\begin{figure}[t]
%\vspace{-5pt}
\centering
\begin{subfigure}{0.3\textwidth}
\includegraphics[width=\textwidth]{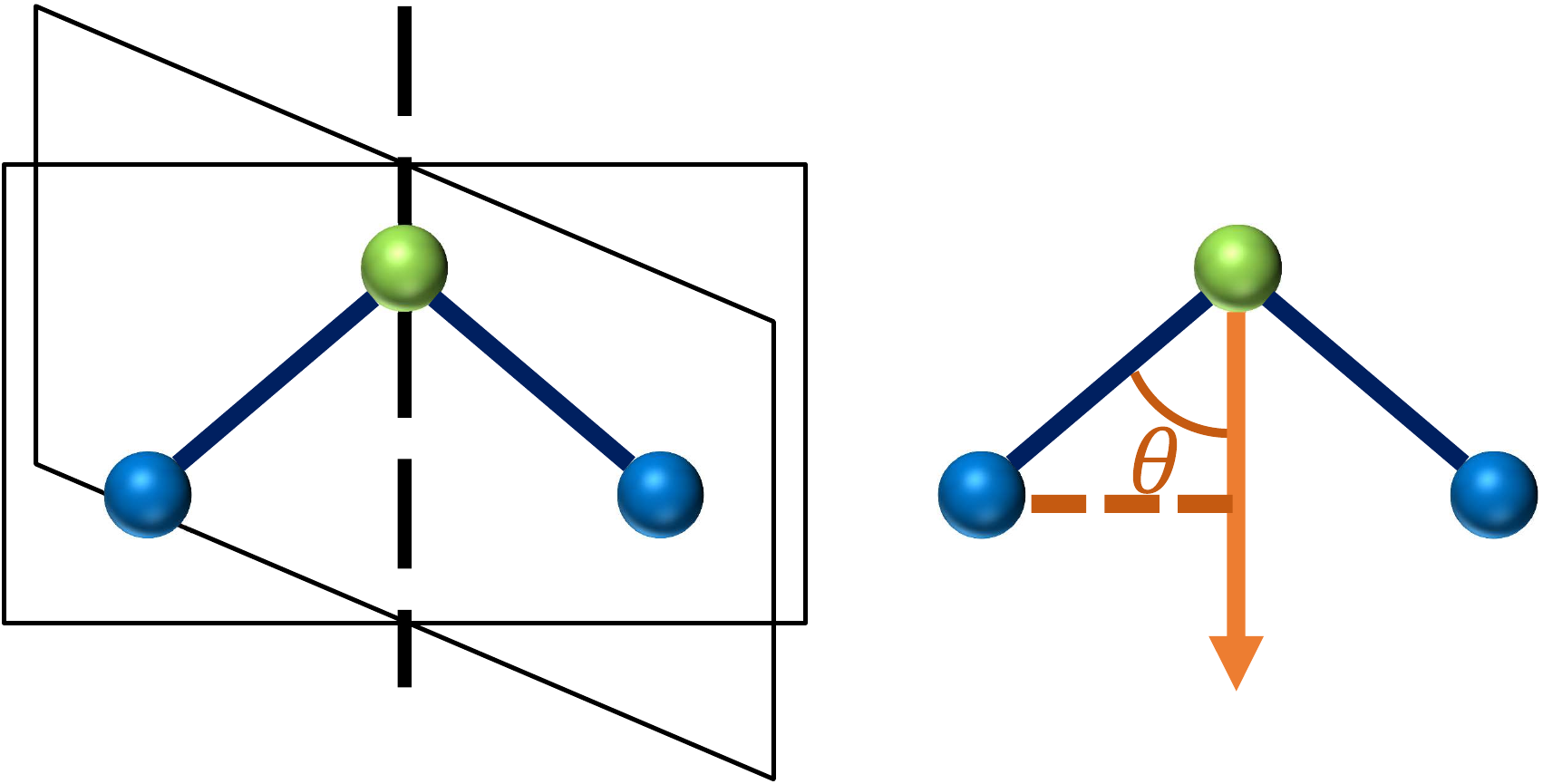}
\caption{}
\label{fig::water}
\end{subfigure}
\hspace{10pt}
\begin{subfigure}{0.3\textwidth}
\includegraphics[width=\textwidth]{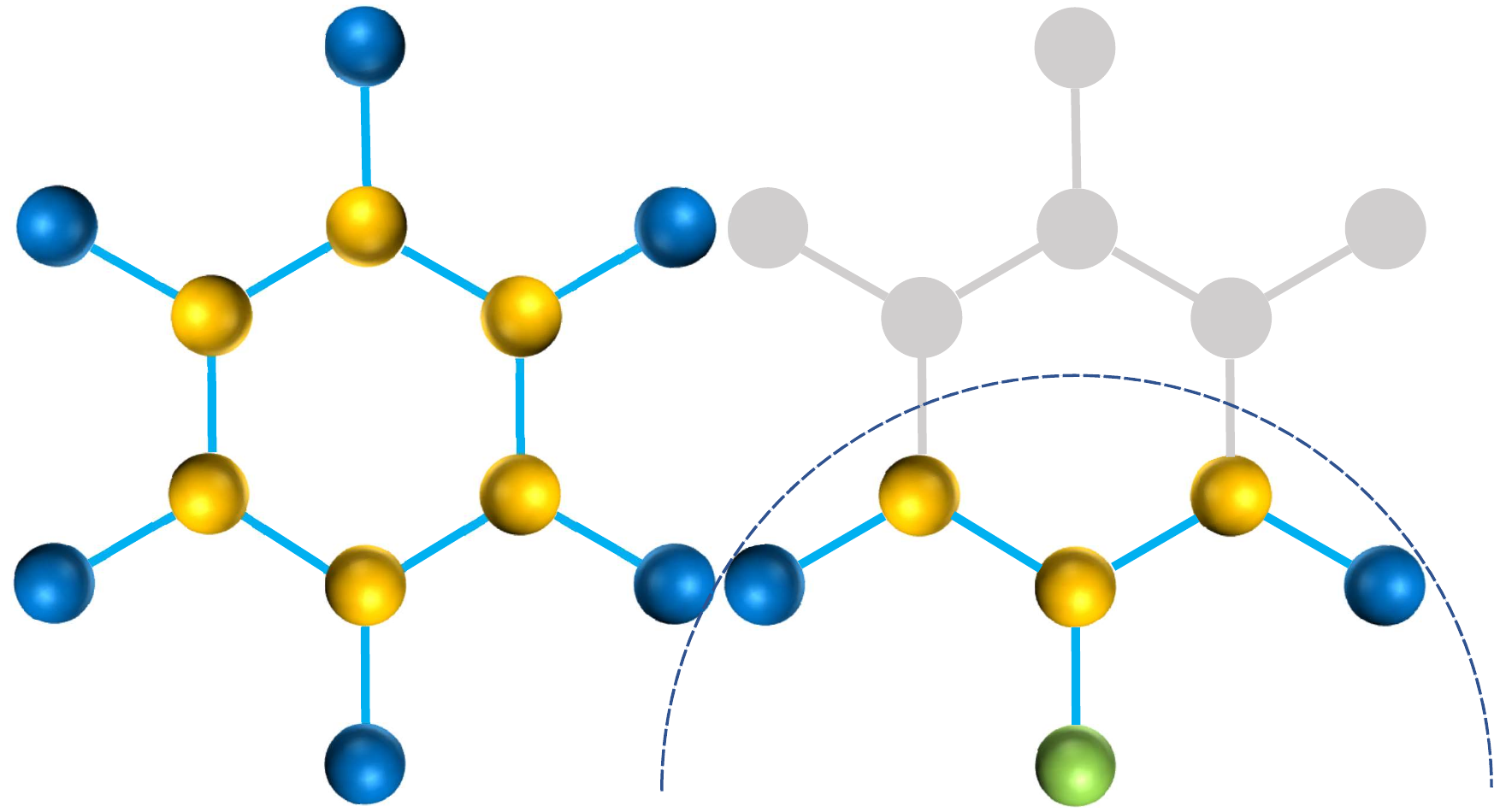}
\caption{}
\label{fig::centralsymmetry}
\end{subfigure}
\caption{(a) The left part shows the symmetry of the water molecule,  which has a rotation axis. Its equivariant vectors must be parallel to the rotation axis. However, with a frame composed of only one vector, its geometry can be described. The right part shows that with the projection of $\vec r_{ij}$ on the frame and the distance between two atoms, the angle $\theta$ and the position of $j$ atom can be determined. (b) The left part is a molecule with central symmetry. Its global frame will be zero. However, when selected as the center (green), the atom's environment has no central symmetry.}
%\vspace{-15pt}
\end{figure}

\textbf{A message passing layer for frame generation.}
The existence of the frame generation function is proved in Theorem~\ref{thr::mpacrosslocalenvironment}. Here we demonstrate how to implement it. There exists a universal framework for approximating O$(3)$-equivariant functions~\citep{ScalarUniverse} which can be used to implement the function in Theorem~\ref{thm::best_agg}. For scalability, we use a simplified form of that framework which has empirically good performance:
\begin{equation}\label{equ::framegeneration}
\vec E_i =\sum_{j\neq i,r_{ij}<r_c} g'(r_{ij}, s_j) \cdot \frac{\vec r_{ij}}{r_{ij}},
\end{equation}
where $g'$ maps invariant features and distance to invariant weights and the entire framework reduces to a message passing process. The derivation is detailed in Appendix~\ref{app::frame_mpnn_form}.

\textbf{Local frame vs global frame.}
With the message passing framework in Equation~\ref{equ::framegeneration}, an individual frame, called \textit{local frame}, is produced for each atom. These local frames can also be summed to produce a \textit{global frame}.
\begin{equation}
\vec E=\sum_{i=1}^n \vec E_i.
\end{equation}
The global frame can replace local frames and keep the invariance of energy prediction. All previous analysis will still be valid if the frame degeneration does not happen. However, the global frame is more likely to degenerate than local frames. As shown in Figure~\ref{fig::centralsymmetry}, the benzene molecule has central symmetry and produces a zero global frame. However, when choosing each atom as the center, the central symmetry is broken, and a non-zero local frame can be produced. We further formalize this intuition and prove that the global frame is more likely to degenerate in Appendix~\ref{app::globalframe}.

In conclusion, \textbf{we can generate local frames with a message passing layer}.

\section{GNN with local frame}\label{sec::impl} 
We formally introduce our GNN with local frame (GNN-LF) model. The whole architecture is detailed in Appendix~\ref{app::arch}. The time and space complexity are $O(Nn)$, where $N$ is the number of atoms in the molecule, and $n$ is the maximum number of neighbor atoms of one atom. 

\textbf{Notations.}
Let $F$ denote the hidden dimension. We first convert the input features, coordinates $\vec r\in \sR^{N\times 3}$ and atomic numbers $z\in \sN^{N}$, to a graph. The initial node feature $s_{i}^{(0)}\in \sR^{F}$ is an embedding of the atomic number $z_i$. Edge $ij$ has two features: the edge weight $w_{ij}=\text{cutoff}(r_{ij})$ (where $\text{cutoff}$ means the cutoff function), and the radial basis expansion of the distance $d^0_{ij}=\text{rbf}(r_{ij})$. Edge weight $w_{ij}$ is not necessary for expressivity. However, to ensure that the energy prediction is a smooth function of coordinates, messages passed among atoms must be scaled with $w_{ij}$~\citep{BP}. These special functions are detailed in Appendix~\ref{app::arch}. 

\textbf{Producing frame.} 
The message passing scheme for producing local frames implements Equation~(\ref{equ::framegeneration}).
\begin{equation}\label{equ::genframe}
\vec E_i=\sum_{j\neq i, r_{ij}<r_c}w_{ij}(f_{1}(d^0_{ij})\odot  s_{j}) \cdot \frac{\vec r_{ij}}{r_{ij}},
\end{equation}
where $f_{1}$ is an MLP. 
Note that frame $\vec E_i \in \sR^{F\times 3}$ in implementation is not restricted to have three vectors. The number of vectors equals the hidden dimension. The frame in $\sR^{F\times 3}$ can be considered as an ensemble of $\frac{F}{3}$ frames in $\sR^{3\times 3}$, so this design will not hamper the expressivity.

\textbf{Coordinate projection} is as follows, 
\begin{equation}\label{equ::dir12}
d^{1}_{ij}=\frac{1}{r_{ij}}\vec r_{ij} \vec E_i^T.
\end{equation}
The projection in implementation is scaled by $\frac{1}{r_{ij}}$ to decouple the distance information in $s_{ij}^{(e)}$.

\textbf{Frame-frame projection}. $\vec E_i\vec E_j^T$ is a large matrix. Therefore, we only use the diagonal elements of the projection. To keep the expressivity, we transform the frame with two ordinary linear layers.
\begin{equation}\label{equ:frame_projection}
d^{2}_{ij}=\text{diag}(W_1\vec E_j\vec E_{i}^TW_2^T).
\end{equation}
Adding the projections to edge features, we get a graph with invariant features only. 

\textbf{GNN working on the invariant graph}.
The message passing scheme uses the form in Theorem~\ref{thr::encode_localenv}. Let the $s_i^{(l)}$ denote the node representations produced by the $l^{\text{th}}$ message passing layers. $s_i^{(0)}=s_i$.
\begin{equation}
s_i^{(l)}=\rho(\sum_{j\neq i, r_{ij}<r_c}w_{ij}(f_{2}(d^0_{ij}, d^{1}_{ij}, d^{2}_{ij})\odot s_{j}^{(l-1)})),
\end{equation}
where $\rho$ is an MLP. We further use a \textit{filter decomposition} design as follows.
\begin{equation}
    f_2(d^0_{ij}, d^1_{ij}, d^2_{ij})
    = g_1(d^0_{ij})\odot g_2(d^1_{ij}, d^2_{ij}).
\end{equation}
The distance information $d^0_{ij}$ is easier to learn as it has been expanded with a set of bases, so a linear layer $g_1$ is enough. In contrast, projections need a more expressive MLP $g_2$.

\textbf{Sharing filters.} Generating different filters $f_2(d^0_{ij}, d^1_{ij}, d^2_{ij})$ for each message passing layer is time-consuming. Therefore, we share filters between different layers. Experimental results show that sharing filters leads to minor performance loss and significant scalability gain.

{
\textbf{Gaps between the implementation and expressivity analysis.} Unlike the expressivity analysis in section~\ref{sec::expressivity}, GNN-LF in implementation uses an ensemble of frames instead of only one, takes a different form of frame-frame projection, and does not constrain the generated frames to be orthogonal. In conclusion, experimental results show that GNN-LF with a single frame for each node can still achieve outstanding performance, and these implementation tricks are harmless to expressivity in theory. The discussion is detailed in Appendix~\ref{app::implementation_gap}.
}
\section{Experiment}
In this section, we compare GNN-LF with existing models and do an ablation analysis. Experimental settings are detailed in Appendix~\ref{app::exp}. The code is available at \url{https://github.com/GraphPKU/GNN-LF}. We report the mean absolute error (MAE) on the test set (the lower, the better). All our results are averaged over three random splits. Baselines' results are from their papers. The best and the second best results are shown in bold and underline respectively in tables. 
%Some of them are not provided by their authors.

\subsection{Modeling PES}
We first evaluate GNN-LF for modeling PES on the MD17 dataset~\citep{MD17}, which consists of MD trajectories of small organic molecules. GNN-LF is compared with a manual descriptor model: FCHL~\citep{fchl} , invariant models: SchNet~\citep{SchNet}, DimeNet~\citep{DirectionalMPNN}, GemNet~\citep{GemNet}, a model using irreducible representations: NequIP~\citep{NequIP}, and models using equivariant representations: PaiNN~\citep{EquiMPNN} and TorchMD~\citep{EquiTransformer}. The results are shown in Table~\ref{tab::md17}.
\begin{table}[t]
%\vskip -5pt
\centering
\caption{Results on the MD17 dataset. Units: energy ($\mathcal{E}$) (kcal/mol) and forces ($\mathcal{F}$) (kcal/mol/\r{A}). %We also list the average rank.
}\label{tab::md17}
\setlength\tabcolsep{3pt}
\begin{tabular}{cccccccccc}
\toprule
Molecule& Target  & FCHL  & SchNet & DimeNet & GemNet          & PaiNN       & NequIP & TorchMD    & \textbf{GNN-LF} \\
\midrule
\multirow{2}{*}{Aspirin}        & $\mathcal{E}$ & 0.182 & 0.37   & 0.204   & -               & 0.167       & -      & \textbf{0.124} & \underline{0.1342}    \\
& $\mathcal{F}$ & 0.478 & 1.35   & 0.499   & \underline{0.2168} & 0.338       & 0.348  & 0.255          & \textbf{0.2018}    \\
\multirow{2}{*}{Benzene}        & $\mathcal{E}$ & -     & 0.08   & 0.078   & -               & -           & -      & \textbf{0.056} & \underline{0.0686}          \\
& $\mathcal{F}$ & -     & 0.31   & 0.187   & \textbf{0.1453} & -           & 0.187  & 0.201          & \underline{0.1506}    \\
\multirow{2}{*}{Ethanol}        & $\mathcal{E}$ & 0.054 & 0.08   & 0.064   & -               & 0.064       & -      & \underline{0.054}    & \textbf{0.0520} \\
& $\mathcal{F}$ & 0.136 & 0.39   & 0.230    & \underline{0.0853} & 0.224       & 0.208  & 0.116          & \textbf{0.0814}    \\
\multirow{2}{*}{Malonaldehyde}  & $\mathcal{E}$ & 0.081 & 0.13   & 0.104   & -               & 0.091       & -      & \underline{0.079}    & \textbf{0.0764} \\
& $\mathcal{F}$ & 0.245 & 0.66   & 0.383   & \underline{0.1545}    & 0.319       & 0.337  & 0.176          & \textbf{0.1259} \\
\multirow{2}{*}{Naphthalene}    & $\mathcal{E}$ & \underline{0.117} & 0.16   & 0.122   & -               & 0.166       & -      & \textbf{0.085} & \underline{0.1136}     \\
& $\mathcal{F}$ & 0.151 & 0.58   & 0.215   & \underline{0.0553}    & 0.077       & 0.097  & 0.060           & \textbf{0.0550} \\
\multirow{2}{*}{Salicylic acid} & $\mathcal{E}$ & 0.114 & 0.20    & 0.134   & -               & 0.166       & -      & \textbf{0.094} & \underline{0.1081}    \\
& $\mathcal{F}$ & 0.221 & 0.85   & 0.374   & \underline{0.1048}    & 0.195       & 0.238  & 0.135          & \textbf{0.1005} \\
\multirow{2}{*}{Toluene}       & $\mathcal{E}$ & 0.098 & 0.12   & 0.102   & -               & 0.095 & -      & \textbf{0.074} & \underline{0.0930}          \\
& $\mathcal{F}$ & 0.203 & 0.57   & 0.216   & \underline{0.0600}    & 0.094       & 0.101  & 0.066          & \textbf{0.0543} \\
\multirow{2}{*}{Uracil}         & $\mathcal{E}$ & 0.104 & 0.14   & 0.115   & -               & 0.106       & -      & \textbf{0.096} & \underline{0.1037}    \\
& $\mathcal{F}$ & 0.105 & 0.56   & 0.301   & 0.0969          & 0.139       & 0.173  & \underline{0.094}    & \textbf{0.0751}
\\
\midrule
average & rank &3.93& 6.63     & 5.38     & 2.00        & 4.36& 5.25      & 2.25      & 1.75     \\
\bottomrule
\end{tabular}
%\vspace{-15pt}
\end{table}
GNN-LF outperforms all the baselines on $9/16$ targets and achieves the second-best performance on other $6$ targets. Our model leads to $10\%$ lower loss on average than GemNet, the best baseline. The outstanding performance verifies the effectiveness of the local frame method for modeling PES. Moreover, our model also uses \textbf{fewer parameters and only about $30\%$ time and $10\%$ GPU memory} compared with the baselines as shown in Appendix~\ref{app::scalability}. 

\subsection{Ablation study}
We perform an ablation study to verify our model designs. The results are shown in Table~\ref{tab::abl}. 

On average, ablation of frame-frame projection (NoDir2) leads to $18\%$ higher MAE, which verifies the necessity of frame-frame projection. The column Global replaces the local frames with the global frame, resulting in $100\%$ higher loss, which verifies local frames' advantages over global frame. Ablation of filter decomposition (NoDecomp) leads to $10\%$ higher loss, indicating the advantage of separately processing distance and projections. Although using different filters for each message passing layer (NoShare) uses much more computation time ($1.67\times$) and parameters ($3.55\times$), it only leads to $0.01\%$ lower loss on average, illustrating that sharing filters does little harm to the expressivity.
\begin{table}[t]
%\vskip -5pt
\caption{Ablation results on the MD17 dataset. Units: energy ($\mathcal{E}$) (kcal/mol) and forces ($\mathcal{F}$) (kcal/mol/\r{A}). %GNN-LF does not use $d^2$ for some molecules, so the NoDir2 column is empty. 
}\label{tab::abl}
\centering
\setlength\tabcolsep{3pt}
\begin{tabular}{cccccccc}
\toprule
Molecule& Target&  GNN-LF& NoDir2 & Global & NoDecomp & GNN-LF & Noshare       \\
\cmidrule(lr){3-6} \cmidrule(lr){7-8}
\midrule
\multirow{2}{*}{Aspirin}        & $\mathcal{E}$ & \textbf{0.1342}  & 0.1499 & 0.2280      & 0.1411          & \textbf{0.1342} & 0.1364          \\
& $\mathcal{F}$ & \textbf{0.2018} & 0.2909 & 0.6894      & 0.2622          & 0.2018          & \textbf{0.1979} \\
\multirow{2}{*}{Benzene}        & $\mathcal{E}$ & \textbf{0.0686}     & 0.0716 & 0.0972      & 0.0688          & \textbf{0.0686} & 0.0713          \\
& $\mathcal{F}$ & 0.1506    & 0.1583 & 0.3520      & \textbf{0.1499} & \textbf{0.1506} & 0.1507          \\
\multirow{2}{*}{Ethanol}        & $\mathcal{E}$ & 0.0520      & 0.0532 & 0.0556      & \textbf{0.0518} & 0.0533          & \textbf{0.0514} \\
& $\mathcal{F}$ & \textbf{0.0814}    & 0.1056 & 0.1465      & 0.0895          & 0.0814          & \textbf{0.0751} \\
\multirow{2}{*}{Malonaldehyde}  & $\mathcal{E}$ & \textbf{0.0764}   & 0.0776 & 0.0923      & 0.0786         & \textbf{0.0764} & 0.0790          \\
& $\mathcal{F}$ & \textbf{0.1259} & 0.1728 & 0.3194      & 0.1422          & 0.1259          & \textbf{0.1210} \\
\multirow{2}{*}{Naphthalene}    & $\mathcal{E}$ & \textbf{0.1136}  & 0.1152 & 0.1276      & 0.1171          & \textbf{0.1136} & 0.1168          \\
& $\mathcal{F}$ & \textbf{0.0550}& 0.0834 & 0.2069      & 0.0683          & 0.0550          & \textbf{0.0547} \\
\multirow{2}{*}{Salicylic acid} & $\mathcal{E}$ & \textbf{0.1081}      & 0.1087 & 0.1224      & 0.1123          & \textbf{0.1081}          & 0.1091 \\
& $\mathcal{F}$ & \textbf{0.1048}    & 0.1412 & 0.2890 & 0.1399          & 0.1048 & \textbf{0.1012} \\
\multirow{2}{*}{Toluene}       & $\mathcal{E}$ &\textbf{0.0930}    & 0.0942 & 0.1000      & 0.0932          & 0.0930          & \textbf{0.0942} \\
& $\mathcal{F}$ & \textbf{0.0543}     & 0.0770 & 0.1659      & 0.0695          &0.0543          & \textbf{0.0519} \\
\multirow{2}{*}{Uracil}         & $\mathcal{E}$ & \textbf{0.1037}    & 0.1069 & 0.1075      & 0.1053          & \textbf{0.1037} & 0.1042          \\
& $\mathcal{F}$ & \textbf{0.0751}   & 0.0964 & 0.1901      & 0.0825          & \textbf{0.0751} & 0.0754         
 \\
\bottomrule
\end{tabular}
%\vspace{-10pt}
\end{table}
\subsection{Other chemical properties}
Though designed for PES, our model can also predict other properties directly. The QM9 dataset~\citep{QM9} consists of 134k stable small organic molecules. The task is to use the atomic numbers and coordinates to predict properties of these molecules. We compare our model with invariant models: SchNet~\citep{SchNet}, DimeNet++~\citep{DimeNetpp}, ComENet~\citep{ComENet}, a model using irreducible representations: Cormorant~\citep{cormorant}, SE(3)-Transformer~\citep{SE(3)Transformer}, and models using equivariant representations: EGNN~\citep{EGNN}, PaiNN~\citep{EquiMPNN} and TorchMD~\citep{EquiTransformer}. Results are shown in Table~\ref{tab::qm9}. Our model outperforms all other models on $7/12$ tasks and achieves the second-best performance on $4/5$ left tasks, which illustrates that the local frame method has the potential to be applied to other fields. 
\begin{table}[t]
\caption{Results on the QM9 dataset. SE(3)-T is short for SE(3)-Transformer}\label{tab::qm9}
\centering
\setlength\tabcolsep{1.5pt}
\begin{tabular}{ccccccccccc}
\toprule
\small{Target}&\small{Unit}&\small{SchNet}&\small{DimeNet++}&\small{ComENet}&\small{Cormorant}&\small{SE}(3)-T&\small{PaiNN}&\small{EGNN}&\small{Torchmd}&\small{\textbf{GNN-LF}}\\
\midrule
$\mu$&D&0.033&0.0297&0.0245&0.038&0.051&\underline{0.012}&0.029&\textbf{0.002}&0.013\\
$\alpha$&$a_0^3$&0.235&0.0435&0.0452&0.085&0.142&0.045&0.071&\textbf{0.01}&\underline{0.0426}\\
$\epsilon_{\text{HOMO}}$&meV&41&24.6&\underline{23.1}&34&35&27.6&29&\textbf{21.2}&{23.5}\\
$\epsilon_{\text{LUMO}}$&meV&34&19.5&19.8&38&33&20.4&25&\underline{17.8}&\textbf{17.0}\\
$\Delta\epsilon$&meV&63&\textbf{32.6}&\textbf{32.4}&61&53&45.7&48&38&\underline{37.1}\\
$\langle R^2\rangle$&$a_0^2$&0.073&0.331&0.259&0.961&-&0.066&0.106&\textbf{0.015}&\underline{0.037}\\
\small{ZPVE}&meV&1.7&1.21&\underline{1.2}&2.027&-&1.28&1.55&2.12&\textbf{1.19}\\
$U_0$&meV&14&6.32&6.59&22&-&\underline{5.85}&11&6.24&\textbf{5.30}\\
$U$&meV&19&6.28&6.82&21&-&\underline{5.83}&12&6.3&\textbf{5.24}\\
$H$&meV&14&6.53&6.86&21&-&\underline{5.98}&12&6.48&\textbf{5.48}\\
$G$&meV&14&7.56&7.98&20&-&\underline{7.35}&12&7.64&\textbf{6.84}\\
$C_v$&\small{cal/mol/K}&0.033&\underline{0.023}&0.024&0.026&0.054&0.024&0.031&0.026&\bf{0.022}\\
\bottomrule
\end{tabular}
%\vspace{-10pt}
\end{table}

\section{Conclusion}
This paper proposes GNN-LF, a simple and effective molecular potential energy surface model. It introduces a novel local frame method to decouple the symmetry requirement and capture rich geometric information. In theory, we prove that even ordinary GNNs can reach maximum expressivity with the local frame method. Furthermore, we propose ways to construct local frames. In experiments, our model outperforms all baselines in both scalability (using only $30\%$ time and $10\%$ GPU memory) and accuracy ($10\%$ lower loss). Ablation study also verifies the effectiveness of our designs. 

\newpage
% For natbib users:
\bibliographystyle{unsrtnat}
\bibliography{reference}
% For bibLaTeX users:
% \printbibliography

\appendix
\appendix

\section{Proofs}\label{app::proofs}
Due to repulsive force, atoms cannot be too close to each other in stable molecules. Therefore, we assume that there exist an upper bound $N$ of the number of neighbor atoms. 

\subsection{Proof of Lemma~\ref{lem::operation}}
\begin{proof}
For all function $g$, invariant representation $s$, transformation $o\in \text{O}(3)$, $g(s(z, \vec ro^T))=g(s)$. Therefore, $g(s)$ is an invariant representation.

For all invariant representation $s$, equivariant representation $\vec v$, and transformation $o\in \text{O}(3)$, 
\begin{equation}
    s(z, \vec ro^T)\cdot \vec v(z, \vec r o^T)=s(z, \vec r)\cdot (\vec v(z, \vec r)o^T)=(s(z, \vec r)\cdot \vec v(z, \vec r))o^T.
\end{equation}
Therefore, $s\cdot \vec v$ is an equivariant representation.

For all equivariant representations $\vec v$, 
\begin{equation}
\vec v(z, \vec ro^T)\vec E(z, \vec ro^T)^T=\vec v(z, \vec r)o^To\vec E(z, \vec r)^T=\vec v(z, \vec r)\vec E(z, \vec r)^T    
\end{equation}
$P_{\vec E}$ is inversible because 
\begin{equation}
    P_{\vec E}(\vec v)\vec E= \vec v\vec E^T\vec E=\vec v.
\end{equation}

For all invariant representations $s\in \sR^{F\times 3}$, 
\begin{equation}
s(z, \vec ro^T)\vec E(z, \vec ro^T)=s(z, \vec r)\vec E(z, \vec r)o^T.    
\end{equation}

Similarly, 
\begin{equation}
\vec v(z, \vec ro^T)\vec v'(z, \vec ro^T)^T=\vec v(z, \vec r)o^To\vec v'(z, \vec r)^T=\vec v(z, \vec r)\vec v'(z, \vec r)^T.    
\end{equation}
Therefore, projection on general equivariant representations can also produce invariant representation.
\end{proof}

\iffalse
\subsection{Proof of Proposition~\ref{prop::proj_bijective}}
\begin{proof}
Let $\vec v$ denote an equivariant representation. $P_{\vec E}(\vec v)$ is invariant representation because
\begin{align}
\forall o\in \text{O}(3), P_{\vec E}(\vec v)(z, \vec ro^T)&=\vec v(z, \vec ro^T)\vec E(z, \vec ro^T)^T\\
%&= P_{\vec E}^{-1}(s)o^T\\
&=\vec v(z, \vec r)o^To\vec E(z, \vec r)^T\\
&=P_{\vec E}(\vec v)(z, \vec r).
\end{align}

$P_{\vec E}$ is bijective as $\vec x = P_{\vec E}(\vec x) (\vec E^{-1})^T$. Obviously, $P_{\vec E}^{-1}(s)=s(\vec E^{-1})^T$, $s\in \sR^{F\times 3}$, $s$ is an invariant representation. To prove that $P^{-1}(s)$ is an equivariant representation, we note that $P^{-1}(s)$ is itself a function of $(z, \vec r)$, which can be written as $P^{-1}(s)(z, \vec r)$. We have
\begin{align}
\forall o\in \text{O}(3), P_{\vec E}^{-1}(s)(z, \vec r)o^T&=s(z, \vec r)[(\vec E(z, \vec r)o^T)^{-1}]^T\\
%&= P_{\vec E}^{-1}(s)o^T\\
&=s(z, \vec ro^T)(\vec E(z, \vec ro^T)^{-1})^T\\
&=P_{\vec E}^{-1}(s)(z, \vec ro^T).
\end{align}
Therefore, $P_{\vec E}^{-1}(s)$ is an equivariant representation.
\end{proof}
\fi

\subsection{Proof of Proposition~\ref{prop::equimap2invmap}}
\begin{proof}
Assume that $s$ is an invariant representation. 
\begin{align}
\tilde g(s) 
&= P_{\vec E}( g( P_{\vec E}^{-1}(s)))\\
&= g( s(\vec E(z, \vec r)^{-1})^T))\vec E(z, \vec r)^T.
\end{align}
The representation $\tilde g(s)$ can be written as a function of $(z, \vec r)$. Then, we have 
\begin{align}
\forall o\in \text{O}(3), \tilde g(s)(z, \vec ro^T)
&= g( s(\vec E(z, \vec ro^T)^{-1})^T))\vec E(z, \vec ro^T)^T\\
&= g( s(\vec E(z, \vec r)^{-1})^T)o^T)o\vec E(z, \vec r)^T\\
&= g( s(\vec E(z, \vec r)^{-1})^T))o^To\vec E(z, \vec r)^T\\
&= g( s(\vec E(z, \vec r)^{-1})^T))\vec E(z, \vec r)^T\\
&= \tilde g(s)(z, \vec r).
\end{align}
Therefore, $\tilde g(s)$ is also an invariant representation.
\end{proof}

\subsection{Proof of Theorem~\ref{thr::encode_localenv}}
We first prove that when the multiset of invariant features and coordinate projections equal, the multiset of invariant features and coordinates are just distinguished from each other with an orthogonal transformation.

\begin{lemma}\label{lem::projequ2vecequ}
Given two frames $\vec E_1$, $\vec E_2$, and two sets of atoms $\{(s_{1,i},\vec r_{1, i}|i=1,2,...,n\}$, $\{(s_{2,i},\vec r_{2, i}|i=1,2,...,n\}$. If $\{(s_{1,i},P_{\vec E_1}(\vec r_{1, i})|i=1,2,...,n\}=\{(s_{2,i},P_{\vec E_2}(\vec r_{2, i})|i=1,2,...,n\}$, there exists $o\in \text{O(3)}$, $\{(s_{1,i},\vec r_{1, i}|i=1,2,...,n\} = \{(s_{2,i},\vec r_{2, i}o^T|i=1,2,...,n\}$
\end{lemma}
\begin{proof}
As $\{(s_{1,i},P_{\vec E_1}(\vec r_{1, i})|i=1,2,...,n\}=\{(s_{2,i},P_{\vec E_2}(\vec r_{2, i})|i=1,2,...,n\}$, there exists permutation $\pi: \{1,2,...,n\}\to\{1,2,...,n\}$, so that
\begin{equation}
    s_{1,i} = s_{1,\pi(i)}, P_{\vec E_1}(\vec r_{1, i})=P_{\vec E_2}(\vec r_{2, \pi(i)})
\end{equation}
\begin{equation}
    s_{1,i} = s_{1,\pi(i)}, \vec r_{1, i}\vec E_1^T=\vec r_{2, \pi(i)}\vec E_2^T
\end{equation}
\begin{equation}
    s_{1,i} = s_{1,\pi(i)}, \vec r_{1, i}=\vec r_{2, \pi(i)}\vec E_2^T\vec E_1.
\end{equation}
As $\vec E_1, \vec E_2$ are both orthogonal matrix, $\vec E_2^T\vec E_1\in \text{O(3)}$. Let $o$ denotes $\vec E_2^T \vec E_1$, 
\begin{equation}
    \{(s_{1,i},\vec r_{1, i}|i=1,2,...,n\} = \{(s_{2,i},\vec r_{2, i}o^T|i=1,2,...,n\}.
\end{equation}
\end{proof}

According to \citep{SetFunction}, there exists $\rho$ and $\psi$ so that
\begin{equation}
    \rho(\sum_{r_{ij}<r_c}\psi(\text{Concatenate}(P_{\vec E_i}(\vec r_{ij}), s_j)))
\end{equation}
encoding $\{(P_{\vec E_i}(\vec r_{ij}), s_j)|r_{ij}<r_c\}$ injectively. Let $\chi$ denote this function. According to Lemma~\ref{lem::projequ2vecequ}, $\chi$ encodes local environments injectively when the difference caused by orthogonal transformation is neglected.

\subsection{Proof of Theorem~\ref{thr::mpacrosslocalenvironment}}
Notation: Given a molecule with atom coordinates $\vec r\in \sR^{N\times 3}$ and atomic features (like embedding of atomic number) $s\in \sR^{N\times F}$, let $\gG$ denote the undirected graph corresponding to the molecule. Node $i$ in $\gG$ represents the atom $i$ in the molecule. $\gG$ has edge $ij$ iff $r_{ij}<r_c$, where $r_c$ is the cutoff radius. Let $d(\gG, i, j)$ denote the shortest path distance between node $i$ and $j$ in graph $\gG$. 

Note that a single layer defined in Equation~\ref{equ::mpwithffproj} can still encode the local environment, as extra frame-frame projection cannot lower the expressivity.
\begin{lemma}
Given a frame $\vec E$, with suitable functions $\rho$ and $\psi$, $\chi$ defined in Equation~\ref{equ::mpwithffproj} encodes the local environment injectively.
\end{lemma}
\begin{proof}
According to Theorem~\ref{thr::encode_localenv}, there exists $\rho'$ and $\psi'$ so that 
\begin{equation}
    \rho(\sum_{r_{ij}<r_c} \psi(\text{Concatenate}(P_{\vec E_i}(\vec r_{ij}), P_{\vec E_i}(\vec E_{j}), s_j))
\end{equation}
can encode local environment injectively. Let $\rho$ equals to $\rho'$, 
\begin{equation}
    \psi(\text{cat}(P_{\vec E_i}(\vec r_{ij}), P_{\vec E_i}(\vec E_{j}), s_j))=\psi'(\text{cat}(P_{\vec E_i}(\vec r_{ij}), s_j)). 
\end{equation}
Then,
\begin{equation}
    \chi(\{(s_j, \vec r_{ij}, \vec E_j)| r_{ij}<r_c\})=\rho'(\sum_{r_{ij}<r_c} \psi'(\text{cat}(P_{\vec E_i}(\vec r_{ij}), s_j))    
\end{equation}
encodes local environment injectively.
\end{proof}

Now we begin to prove the Theorem~\ref{thr::mpacrosslocalenvironment}. 
\begin{proof}

We use $\text{cat}$ to represent concatenate throughout the proof. Let $N(i)_l$ denote $\{j|d(\gG, i, j)\le l\}$. The $l^{\text{th}}$ message passing layer has the following form.
\begin{equation}
s_i^{(l)}=\rho_l(\sum_{j\in N_1(i)} \psi_l(\text{cat}(P_{\vec E_i}(\vec r_{ij}), P_{\vec E_i}(\vec E_{j}), s_j^{(l-1)})),
\end{equation}
where $s_i^{(0)}=s_i$. 

By enumeration on $l$, we prove that there exist $\rho_l$, $\psi_l$ so that $s_i^{(l)}=\varphi(\{\text{cat}(s_j, P_{\vec E_i}(\vec r_{ij}))|j\in N_l(i)\})$.

We first define some auxiliary functions.

According to \citep{SetFunction}, there exists a multiset function $\varphi$ mapping a multiset of invariant representations to an invariant representation injectively. $\varphi$ can have the following form
\begin{equation}
    \varphi(\{x_i|i\in\sI\})=\sum_i \psi(x_i),
\end{equation}
where $\sI$ is some finite index set.
As $\varphi$ is injective, it has an inverse function.

We define function $m, m', m''$ to extract invariant representation from concatenated node features.
\begin{equation}
    m(\text{cat}(P_{\vec E_i}(\vec r_{ij}), P_{\vec E_i}(\vec E_{j}), s_j^{(0)}))
    = \text{cat}(s_j^{(0)}, P_{\vec E_i}(\vec r_{ij})).
\end{equation}
\begin{equation}
    m'(\text{cat}(s_j, P_{\vec E_i}(\vec r_{ij})))
    = s_j.
\end{equation}
\begin{equation}
    m''(\text{cat}(s_j, P_{\vec E_i}(\vec r_{ij})))
    = P_{\vec E_i}(\vec r_{ij}).
\end{equation}

Last but not least, there exist a function $T$ transforming coordinate projections from one frame to another frame.
\begin{equation}
    T(P_{\vec E_i}(\vec r_{ij}), P_{\vec E_i}(\vec E_{j}), P_{\vec E_j}(\vec r_{jk}))= P_{\vec E_i}(\vec r_{ij}) +  P_{\vec E_j}(\vec r_{jk})P_{\vec E_i}(\vec E_{j}) = P_{\vec E_i}(\vec r_{ik}) 
\end{equation}

$l=1$: let $\psi_1 = \psi\cdot m$ , $\rho_1$ is identity mapping. 

$l>1$: Assume for all $l'<l$,  $s_i^{(l')}=\varphi(\{\text{cat}(s_j,P_{\vec E_i}(\vec r_{ij}))|j\in N_{l'}(i)\})$.

$\psi_{l}$ has the following form.
\begin{multline}
\psi_l(\text{cat}(P_{\vec E_i}(\vec r_{ij}), P_{\vec E_i}(\vec E_{j}), s_j^{(l-1)})) = \psi(\varphi(\\\{
    \text{cat}(m'(x), T(P_{\vec E_i}(\vec r_{ij}), P_{\vec E_i}(\vec E_{j}), m''(x))
    | x\in \varphi^{-1}(s_j^{(l-1)})\})).   
\end{multline}
Therefore
\begin{equation}
\psi_l(\text{cat}(P_{\vec E_i}(\vec r_{ij}), P_{\vec E_i}(\vec E_{j}), s_j^{(l-1)}))=\psi(\varphi(\{\text{cat}(s_k,P_{\vec E_i}(\vec r_{ik}))|k\in N_{l-1}(j)\})).
\end{equation}

Note $\psi_{l}$ transforms coordinate projection from an old frame to a new frame.

Therefore, the input of $\rho_l$, namely $a_i^{(l)}$, has the following form.
\begin{align}
a_i^{(l)}
&=\sum_{j\in N(i)} \psi_l(\text{cat}(P_{\vec E_i}(\vec r_{ij}), P_{\vec E_i}(\vec E_{j}), s_j^{(l-1)})\\
&=\sum_{j\in N(i)}\psi(\varphi(
\{\text{cat}(s_k, P_{\vec E_i}(\vec r_{ik}))|k\in N_{l-1}(j)\}))\\
&=\varphi(\{\varphi(\{\text{cat}(s_k, P_{\vec E_i}(\vec r_{ik}))|k\in N_{l-1}(j)\})|j\in N(i)\})
\end{align}

We can transform $a_i^{(l)}$ to a set of set of invariant representations with the following function.

\begin{equation}
\eta(a_i^{(l)})=\{\varphi^{-1}(s)|s\in \varphi^{-1}(a_i^{(l)})\}.
\end{equation}

Therefore, $\eta(a_i^{(l)})=\{\{\text{cat}(s_k, P_{\vec E_i}(\vec r_{ik}))|k\in N_{l-1}(j)\}|j\in N(i)\}$

We can use another function $\iota$ unions invariant representation sets in set $\sS$ to a set of invariant representation.
\begin{equation}
    \iota(\sS)=\bigcup_{s\in \sS} s.
\end{equation}

$\rho_l$ has the following form.
\begin{equation}
\rho_l(a_i^{(l)}) 
= \varphi\circ \iota\circ \eta (a_i^{(l)}).
\end{equation}

Therefore, the output is
\begin{equation}
\rho_l(a_i^{(l)})  = \varphi(\{\text{cat}(s_k, P_{\vec E_i}(\vec r_{ik}))|k\in N_l(i)\}\})
\end{equation}

Therefore, $\forall l\in \sN$, there exists $\rho_l$, $\psi_l$ so that $s_i^{(l)}=\varphi(\{\text{cat}(s_j, P_{\vec E_i}(\vec r_{ij}))|j\in N_l(i)\})$.

As $L$ is the diameter of $\gG$, $s_i^{(L)}=s_i^{(l)}=\varphi(\{\text{cat}(s_j, P_{\vec E_i}(\vec r_{ij}))|j\in N_L(i)\}) = \varphi(\{\text{cat}(s_j, P_{\vec E_i}(\vec r_{ij}))|j\in \{1,2,..., n\}\})$. As $\varphi$ is an injective function, GNN with $L$ message passing layers defined in Equation~\ref{equ::mpwithffproj} can encode the $\{(s_i, P_{\vec E_i}\vec r_{ij})|j\in \{1,2,...,n\}\}$ injectively to $s_i^{(L)}$. According to Lemma~\ref{lem::projequ2vecequ}, this GNN encodes the whole molecule $\{(s_i, \vec r_{ij})|j\in \{1,2,...,n\}\}$ when the difference caused by orthogonal transformation is neglected.
\end{proof}

\iffalse
\subsection{Proof of Proposition~\ref{prop::centersymmetry}}
\begin{align}
    \vec v(\{(s_i,-\vec r_i)\})&= \vec v(\{(s_i,\vec r_i\begin{bmatrix}-1&0&0\\0&-1&0\\0&0&-1\end{bmatrix})\})\\
    &=\vec v(\{(s_i,\vec r_i)\})\begin{bmatrix}-1&0&0\\0&-1&0\\0&0&-1\end{bmatrix}\\
    &=-\vec v(\{(s_i,\vec r_i)\}).
\end{align}
Therefore, if $\{(s_i,\vec r_i)\}=\{(s_i, -\vec r_i)\}$, $\vec v(\{(s_i,\vec r_i)\})=-\vec v(\{(s_i,\vec r_i)\})=0$. 
\fi
\subsection{Proof of Theorem~\ref{thm::best_agg}}\label{app::proofprop::best_agg}
\begin{proof}
(1) We first prove there exists an O(3)-equivariant function $g$ mapping the local environment $\text{LE}_i$ to a frame $\vec E_i \in \sR^{3\times 3}$. The frame has full rank if there does not exist $o\in \text{O}(3), o\neq I, o(LE_i) = LE_i$.

Let $\gamma$ denote a function mapping local environments to sets of vectors.
\begin{equation}
    \gamma(\{(\vec r_{ij}, s_j)| r_{ij}<r_c\})=\{\text{Concatenate}(s_j\vec r_{ij},\vec r_{ij})|r_{ij}<r_c\}, 
\end{equation}
in which $s_j$ is reshaped as $F\times 1$, $\vec r_{ij}$ is of shape $1\times 3$. $\gamma$ is $\text{O}(3)$-equivariant. Therefore, we discuss the aggregation function on a set of equivariant representation, denoted as $\{\vec u_{i}| i=1,2 ,..., n, \vec u_{i}\in \R^{F\times 3}\}$. 

Assume that $V=\{\{\vec u_{i}| i=1,2 ,..., n, \vec u_{i}\in \R^{F\times 3}\}| n=1,2,..., N\}$, where $N$ is the upper bound of the size of local environment, is the set of sets of equivariant messages in local environment.

An equivalence relation can be defined on $V$: $v_1\in V, v_2\in V, v_1\sim v_2$ iff there exists $ o\in \text{O}(3), o(v_1)=v_2$. Let $\tilde V=V/\sim$ denote the quotient set. %$\tilde V$ is the set of equivalence classes in $V$. 
For each equivalence class $[v]$ with no symmetry, a representative $v$ can be selected. We can define a function $r:\tilde V-\{[v]| [v]\in \tilde V, \exists v\in [v], o\in \text{O}(3), o\neq I, o(v)=v\}\to V$ as $r([v])=v$ mapping each equivalence class with no symmetry to its representative. For a message set with no symmetry, the transformation from its representative to it is also unique. Let $h: V-\{v| v\in V, \exists o\in \text{O}(3), o\neq I, o(v)=v\}\to \text{O}(3)$. $h(v)=o, o(r([v]))=v$.

Therefore, the function $g$ can take the form as follows.
$$
g(v)=\begin{cases}
\left[\begin{matrix}
0&0&0\\
0&0&0\\
0&0&0
\end{matrix}
\right]
&\text{if there exists } o\in \text{O(3)}, o\neq I, ov=v\\
h(v)^T & \text{otherwise}
\end{cases}
$$

Therefore, $g\circ\gamma$ is the required function.

{
We further detail how to select the representative elements:
We first define a linear order relation $\le_l$ in $V$. If $v_1, v_2\in V, |v_1|<|v_2|$, $v_1<_l v_2$. So we only consider the order relation between two sets of the same size $n$.

We first define a function $\varphi$ mapping message set to a sequence injectively. 
\begin{multline}
    \varphi(\{u_i| i=1,2,...,n, u_i\in \sR^{F\times 3}\})
    =[\text{flatten}(u_{\pi(i)})| i=1,2,..., n, \\
    \pi\text{ is a permutation sorting }u_i \text{ by lexicographical order}].
\end{multline}
Forall $v_1, v_2\in V, |v_1|=|v_2|$, $v_1\le_l v_2$ iff $\varphi(v_1)\le \varphi(v_2)$ by lexicographical order. As the size of local environent is bounded, the sequence is also of a finite length. Therefore, the lexicographical order and thus the linear order relation $\le_l$ are well-defined. 

All permutations of $\{1,2,..., n\}$ form a permutation set $\Pi_n$.

For all $[v]\in \tilde V$, let $r([v])=\argmin_{v'\in [v]}\varphi(v')$. To illustrate the existence of such minimal sequence, we reform it.
\begin{align}
min_{v'\in [v]}\varphi(v')&=\min_{\pi \in \Pi_n,o\in \text{O}(3)}S(o, \pi)\\
&=\min_{\pi \in \Pi_n}\min_{o\in \text{O}(3)}S(o, \pi),
\end{align}
where $ S(o, \pi)=[\text{flatten}(u_{\pi(i)}o^T)| i=1,2,..., n]$. Each element of this sequence is continuous to $o$.

We first fix the $\pi$. As $\text{O}(3)$ is a compact group, $\argmin_{o\in \text{O}(3)}S(o, \pi)_1$ exists. Let $L_1=\{o|o\in \text{O}(3), S(o, \pi)_1=\min_{o'\in \text{O}(3)}S(o', \pi)_1\}$ is still a compact set. Therefore, $\argmin_{o\in L_1}S(o, \pi)_1$ exists. Let $L_2=\{o|o\in L_1, S(o, \pi)_2=\min_{o'\in L_1}S(o', \pi)_2 \}$. Similarly, $L_3,L_4, ...,L_{3Fn}$ can be defined and they are non-empty set. Forall $o_1, o_2\in L_{3Fn}$, as $S(o_1, \pi)\le S(o_2, \pi)$ and $S(o_2, \pi)\le S(o_1, \pi)$ by lexicographical order, $S(o_2, \pi)= S(o_1,\pi)$ and thus $o_1(v)=o_2(v)$. If v has no symmetry, $\forall o\in \text{O}(3), o\neq I, o(v)\neq v$, $o_1(v)=o_2(v)\Longrightarrow o_1=o_2$.  Therefore, $L_{3Fn}$ contains a unique element $o^{(0)}_v$ and $\min_{o\in \text{O}(3)}S(o, \pi)$ is unique. 

As $\Pi_n$ is a finite set, if $\min_{o\in \text{O}(3)}S(o, \pi)$ exist for all $\pi\in \Pi_n$, $\min_{\pi \in \Pi_n}\min_{o\in \text{O}(3)}S(o, \pi)$ must exist. Therefore the minimal sequence exists. As $\le_l$ is a linear order, the minimal sequence is unique. With the unique sequence, the unique representative can be determined.}

(2) Then we prove there does not exist $o\in \text{O}(3), o\neq I, o(LE_i) = LE_i$ if the frame has full rank.

The frame $\vec E$ is a function of local environment. If there exists
$$
o\in \text{O}(3), o(LE_i) = LE_i.
$$
Then $\vec E(o(LE_i))=\vec E(LE_i)o^T= \vec E(LE_i)$.

As $\vec E$ is an invertible matrix, $o=I$. Therefore, $LE_i$ has no symmetry.

\end{proof}

\section{Derivation of the message passing section for frame}\label{app::frame_mpnn_form}
The framework proposed by \citet{ScalarUniverse} is
\begin{equation}
h_n(\{\vec m_{i1}, \vec m_{i2}, \vec m_{i2},..., \vec m_{in}\})
=\sum_{j=1}^n 
g(\vec m_{ij}, \{\vec m_{i1}, ...,\vec m_{in}\}-\{\vec m_{ij}\})\cdot\vec m_{ij},
\end{equation} 
where $h_n$ is the aggregation function for $n$ messages. $g$ is a $\text{O}(3)$-invariant functions. We can further reform it. 
\begin{equation}
h_n(\{\{\vec m_{i1}, \vec m_{i2}, ..., \vec m_{in}\}\})
=\sum_{j=1}^n 
g_1^{(n)}(g_2^{(n)}(\vec m_{ij}), h_{n-1}(\{\vec m_{i1}, \vec m_{i2}, ...,\vec m_{i, n}\}-\{\vec m_{ij}\}))\vec m_{ij},
\end{equation}
where $g_1^{(n)}, g_2^{(n-1)}$ are two $\text{O}(3)$-invariant functions. With this equation, we can recursively build $n$ message aggregation function $h_n$ with $h_{n-1}$. Its universal approximation power has been proved in \citep{ScalarUniverse}.

However, as they can have varied numbers of neighbors, different nodes have to use different aggregation functions, which is hard to implement. Therefore, we desert the recursive term $h_{n-1}$.
\begin{equation}\label{equ::frame_mpnn}
h_n(\{\{\vec m_{i1}, \vec m_{i2}, ..., \vec m_{in}\}\})
=\sum_{j=1}^n g(\vec m_{ij})\vec m_{ij}.
\end{equation}

The message $\vec m_{ij}$ can have the form $\text{Concatenate}(1,s)\cdot\vec r_{ij}$ in Theorem~\ref{thr::encode_localenv}.  As $g$ is an invariant function, we can further simplify Equation~\ref{equ::frame_mpnn}.
\begin{equation}\label{equ::frame_mpnn2}
h_n(\{\{\vec m_{i1}, \vec m_{i2}, ..., \vec m_{in}\}\})
=\sum_{j=1}^n g'(r_{ij}, s_j) \frac{\vec r_{ij}}{r_{ij}},
\end{equation}
where $g'$ is a function mapping invariant representations to invariant representations.

\iffalse
\section{Are orthonormal vectors necessary?}\label{app::gramschmidt}
Usually, frame means a set of orthonormal vectors, while $\vec E_i$ is just some vectors. A straightforward method is to use the Gram-Schmidt process. However, it leads to exploding gradients, which can be explained as follows.

\begin{align}
v&= LQ,\\
\frac{\partial l}{\partial v}&=(L^{-1})^T[\frac{\partial l}{\partial Q}-\text{copyltu}(\frac{\partial l}{\partial Q}Q^T)Q],
\end{align}
where $Q\in \sR^{3\times 3}$ is the orthonormal vectors, $L\in \sR^{3\times 3}$ is the coefficients of the Gram-Schmidt process, $l$ is the loss function, $\text{copyltu}$ means generates a symmetric matrix by copying the lower triangle to the upper triangle. Note that $L$ can be a singular matrix, and $(L^{-1})^T$ can make the gradient of loss explode. Moreover, the analysis before does not require the orthonormality of the frame. Therefore, we use $\vec E_i$ as the local frame. 
\fi

\begin{figure}[t]
\centering
\includegraphics[width=0.95\textwidth]{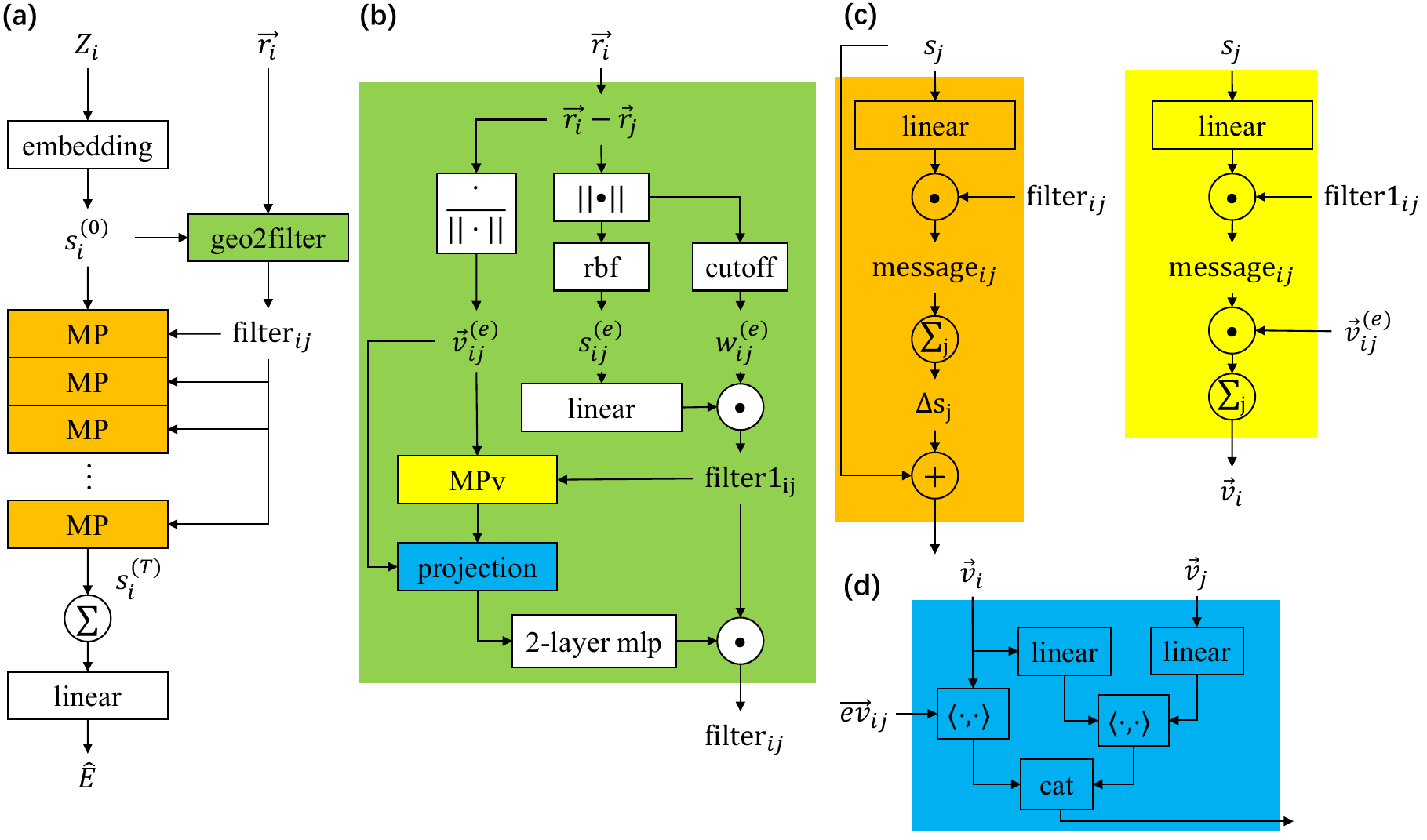}
\caption{The architecture of GNN-LF. (a) The full architecture of GNN-LF contains four parts: an embedding block, a geo2filter block, message passing (MP) layers, and an output module. Embedding block consists of an embedding layer converting atomic numbers to learnable tensors and a neighborhood embedding block proposed by ~\citet{EquiTransformer}. (b) The geo2filter block builds a graph with the coordinates of atoms, passes messages to produce local frames, projects equivariant features onto the frames, and uses edge invariant features to produce edge filters. (c) A message passing layer filters atom representations with edge filters to produce messages and aggregates these messages to update atom embeddings. (d) The projection block produces $d^1, d^2$ and concatenates them.}\label{fig::arch}
%\vspace{-10pt}
\end{figure}

\section{Archtecture of GNN-LF}\label{app::arch}
The full architecture is shown in Figure~\ref{fig::arch}. 

Following ~\citet{EquiTransformer}, we also use a neighborhood embedding block which aggregates neighborhood information as the initial atom feature.
\begin{equation}
    s_i^{(0)}=\text{Emb}_1(z_i)+\sum_{r_{ij}<r_c}Emb_2(z_j)\odot f(d^0_{ij}).
\end{equation}
where $\text{Emb}_1$ and $\text{Emb}_2$ are two embedding layers and $f$ is the filter function.

These special functions are proposed by previous methods~\citep{BP, PhysNet}.
\begin{equation}
\text{cutoff}(r)=
\begin{cases}
\frac{1}{2} (1+\cos{\frac{\pi r}{r_c}}), r<r_c\\
0, r>r_c
\end{cases}
\end{equation}

\begin{equation}
\text{rbf}_k(r_{ij})=e^{-\beta_k(\exp(-r_{ij})-\mu_k)^2},
\end{equation}
where $\beta_k$, $\mu_k$ are coefficients of the $k^{\text{th}}$ basis. 

For PES tasks, the output module is a sum pooling and a linear layer. Other invariant prediction tasks can also use this module. However, on the QM9 dataset, we design special output modules for two properties. 
For dipole moment $\mu$, given node representations $[s_i|i=1,2,...,N]$ and atom coordinates $[\vec r_i|i=1,2,...,N]$, our prediction is as follows.
\begin{equation}
    \hat\mu = |\sum_{i}(q_i-\text{average}_j(q_j)) \vec r_i|,
\end{equation}
where $q_i\in \sR$, the prediction of charge, is function of $s_i$. We use a linear layer to convert $s_i$ to $q_i$. As the whole molecule is electroneutral, we use $q_i-\text{average}_j(q_j)$. 

For electronic spatial extent $\langle R^2\rangle$, we make use of atom mass (known constants) $[m_i|i=1,2,..., N]$. The output module is as follows.
\begin{align}
    \vec r_c &= \frac{\sum_i m_i\vec r_{i}}{\sum_i m_i}\\
    \hat{\langle R^2\rangle} &=  |\sum_{i}x_i |\vec r_i-\vec r_c|^2|,
\end{align}
where $x_i\in \sR$ is an invariant representation feature of node $i$. We also use a linear layer to convert $s_i$ to $x_i$.
\begin{table}[h]
\caption{The training, inference time and the GPU memory consumption of random batches of 32 molecules (16 molecules for GemNet) from the MD17 dataset. The format is training time in ms/inference time in ms/inference GPU memory comsumption in MB. $N$ denotes the number of atoms in the molecule, and $n$ means an atom's maximum number of neighbors.}\label{tab::scalability}
\centering
\setlength\tabcolsep{1.5pt}
\begin{tabular}{cccccc}
\toprule
&DimeNet&GemNet&torchmd&GNN-LF&NoShare\\
\midrule
number of parameters&2.1 M&2.2 M&1.3 M&0.8M$\sim$1.3M&2.4M$\sim$5.3M\\
\midrule
time complexity&$O(Nn^2)$&$O(Nn^3)$&$O(Nn)$&$O(Nn)$&$O(Nn)$\\
\midrule
aspirin&727/133/5790&2823/612/15980&188/32/2065&65/10/279&142/22/883\\
benzene&669/~~94/1831&2242/393/3761~~&478/33/918~~&29/~~8/95~~&~~40/11/213\\
ethanol&672/~~95/784~~&2256/344/1565~~&417/32/532~~&59/~~8/54~~&~~76/11/115\\
malonaldehyde&657/~~88/784~~&2237/355/1565~~&753/32/532~~&57/~~7/68~~&~~68/10/127\\
naphthalene&614/112/4470&2613/498/11661&265/32/1694&61/~~9/175&~~97/15/491\\
salicylic\_acid&619/~~92/3489&2577/430/8182~~&239/34/1418&59/~~9/176&~~79/15/424\\
toluene&595/113/3148&2495/423/7153~~&896/45/1322&62/~~8/176&~~83/15/458\\
uracil&595/107/1782&2165/354/3735~~&118/32/907~~&66/~~8/99~~&~~87/14/302\\
\midrule
average &643/104/2760&2426/426/6700~~&419/34/1174&57/~~9/140 &~~84/14/377\\
\bottomrule
\end{tabular}
\end{table}

\begin{table}[h]
\caption{The inference time and the GPU memory consumption of random batches of 32 molecules from the QM9 dataset and $U_0$ target. The format is inference time in ms/inference GPU memory comsumption in MB.}\label{tab::scalabilityqm9}
\centering
\begin{tabular}{cccc}
\toprule
&EGNN&ComENet&GNN-LF\\
\midrule
number of parameters&0.75M &4.2M &1.7M\\
GPU memory & 1105& 356&329\\
inference time (ms) & 4.2& 11.9& 6.6\\
\bottomrule
\end{tabular}
\end{table}

\section{Experiment settings}\label{app::exp}
{\bf Computing infrastructure.}~~We leverage Pytorch for model development. Hyperparameter searching and model training are conducted on an Nvidia A100 GPU. Inference times are calculated on an Nvidia RTX 3090 GPU.

{\bf Training process.}~~For MD17/QM9 dataset, we set an upper bound (6000/1000) for the number of epoches and use an early stop strategy which finishes training if the validation score does not increase after 500/50 epoches. We utilize Adam optimizer and ReduceLROnPlateau learning rate scheduler to optimize models. 

{\bf Model hyperparameter tuning.}~~Hyperparameters were selected to minimize l1 loss on the validation sets. For MD17/QM9, we fix the initial lr to $1e-3/3e-4$, batch size to $16/64$, hidden dimension to $256$. The cutoff radius is selected from $[4, 12]$. The number of message passing layer is selected from $[4, 8]$. The dimension of rbf is selected from $[32, 96]$. Please refer to our code for the detailed settings.

{\bf Dataset split.}~~We randomly split the molecule set into train/validation/test sets. For MD17, the size of the train and validation set are $950, 50$ respectively. All remaining data is used for test. For QM9: The sizes of randomly splited train/validation/test sets are $110000, 10000, 10831$ respectively. 

\section{Scalability}\label{app::scalability}
To assess the scalability of our model, we show the inference time of random MD17 batches of $32$ molecules on an NVIDIA RTX 3090 GPU. The results are shown in Table~\ref{tab::scalability}. Note that GemNet consumes too much memory, and only batches of $16$ molecules can fit in the GPU. Our model only takes $30\%$ time and $12\%$ space compared with the fastest baseline. Moreover, NoShare use $260\%$ more space and $67\%$ more computation time than GNN-LF with filter sharing technique.

{
We also compare computational overhead on tasks other than PES in the QM9 dataset (see Table~\ref{tab::scalabilityqm9}). As we use the same model for different tasks in the QM9 dataset, models are only compared on $U_0$ task. GNN-LF achieves the lowest GPU memory consumption, competitive inference speed, and model size.
}
\section{Existing methods using frame}\label{app::existingframes}

Though some previous works~\citep{Frame1,EGNN,GMN,CoordNet,Frame_kofina} also use the term "frame" or designs similar to "frame", they are very different methods from ours. 

The primary motivation of our work is to get rid of equivariant representation for higher and provable expressivity, simpler architecture, and better scalability. We only use equivariant representations in the frame generation and projection process. After projection, all the remaining parts of our model only operates on invariant representations. In contrast, existing works~\citep{Frame1,EGNN,GMN,CoordNet} still use both equivariant and invariant representations, resulting in extra complexity even after using frame. For example, functions for equivariant and invariant representations still need to be defined separately, and complex operation is needed to mix the information contained in the two kinds of representations. In addition, our model can beat the representative methods of this kind in both accuracy and scalability on potential energy surface prediction tasks.

Other than the different primary motivation, our model has an entirely different architecture from existing ones.
\begin{enumerate}
    \item Generating frame: ClofNet~\citep{Frame1} and LoCS~\citep{Frame_kofina} produces a frame for each pair of nodes and use some process not learnable  to produce the frame. Both EGNN~\citep{EGNN} and GMN~\citep{GMN} use coordinate embeddings which are initialized with coordinates. \citet{CoordNet} initializes the frame with zero. Then these models use some schemes to update the frame. Our model uses a novel message passing scheme to produce frames and will not update it, leading to simpler architecture and low computation overhead.
    \item Projection: Existing models~\citep{Frame1,EGNN,GMN,CoordNet} only project equivariant features onto the frame, while we also use frame-frame projection, which is verified to be critical both experimentally and theoretically.
    \item Message passing layer: Existing models~\citep{Frame1,Frame_kofina,EGNN,GMN,CoordNet} all use both invariant representation and equivariant features and pass both invariant and equivariant messages, which needs to mix invariant representations and equivariant representations, update invariant representations, and update equivariant representations, while our model only uses invariant representations, resulting in an entirely different and much simpler design with significantly higher scalability.
    \item Our designing tricks, including: message passing scheme to produce frame, filter decomposition, and filter sharing, are not used in~\citep{Frame1,EGNN,GMN,CoordNet}. Our experiments and ablation study verified their effectiveness.
\end{enumerate}

Furthermore, existing models use different groups to describe symmetry. \citet{Frame1, CoordNet, Frame_kofina} design  $\text{SO}(3)$-equivariant model, while our model is $\text{O}(3)$-equivariant. We emphasize that this is not a constraint of our model but a requirement of the task. As most target properties of molecules are $\text{O}(3)$-equivariant (including energy and force we aim to predict), our model can fully describe the symmetry.

Our theoretical analysis is also novel. \citet{EGNN, GMN, CoordNet} have no theoretical analysis of expressivity. \citet{Frame1}'s analysis is primarily based on the framework of \citet{TFNProof}, which is further based on the polynomial approximation and the group representation theory. The conclusion is that a model needs many message passing layers to approximate high-order polynomials and achieve universality. Our theoretical analysis gets rid of polynomial and group representation and provides a much simpler analysis. We also prove that one message passing layer proposed in our paper are enough to be universal.

In summary, although also using “frame”, our work is significantly different to any existing work in either method, theory, or task. 

\textbf{Gauge-equivariant CNNs}
Gauge-equivariance methods~\citep{GaugeEquivariantIcos,GaugeEquiAniso,GaugeEquiSurface} have never been used in the potential energy surface task. The methods seem similar to ours as they also project equivariant representations onto some selected orientations. However, the differences are apparent.

\begin{enumerate}
    \item Some of these methods are not strictly O(3)-equivariant. For example, the model of  \citet{GaugeEquiAniso} is not strictly equivariant for angle $\neq 2\pi/N$, while our model (and all existing models for potential enerby surface) is strictly $\text{O}(3)$-equivariant. Loss of $\text{O}(3)$-equivariance leads to high sample complexity.
    \item Building grid is infeasible for potential energy surface tasks as atoms can move in the whole space. Moreover, the energy prediction must be a smooth function of the coordinates of atoms. Therefore, the space should not be discretized. The model of \citet{GaugeEquiSurface} works on some discrete grid and cannot be used for the molecular force field.
    \item Even though \citet{GaugeEquiSurface} seem to achieve strict O(3)-equivariance with high complexity, it only uses the tangent plane's angle and loses some information.  Only \textbf{one} angle relative to a reference neighbor is used. Such an angle is expressive enough in a \textbf{2D} tangent space because the coordinate can be represented as $(r\cos\theta, r\sin\theta)$. However, for molecule in \textbf{3D} space, such an angle is not enough(the coordinates can be represented as $(r\cos\theta\sin\phi, r\sin\theta\sin\phi,r\cos\phi)$. The angles in tangent space only provide $\theta$). In contrast, we use the projection on \text{three} frame directions, so our model can fully capture the coordinates.
    \item Gauge-equivariance methods all use some constained kernels, which needs careful designation. Our model needs \textbf{no specially designed kernel} and can directly use the ordinary message passing scheme. Such simple design leads to provable expressivity, simpler architecture, and low time complexity. Our time complexity is $O(Nn)$, while the that of \citet{GaugeEquiSurface} is $O(Nn^2)$, where $N$ is the number of atoms, $n$ is the maximum node degree).
\end{enumerate}

\section{Expressivity with symmetric input}~\label{app::syminput}
We use the symbol in Equation~\ref{equ::mpwithffproj}
SchNet's message can be formalized as follows.
\begin{equation}
\chi(\{(s_j, \vec r_{ij}, \vec E_j)| r_{ij}<r_c\})=\rho(\sum_{r_{ij}<r_c} \psi(\text{Concatenate}(s_j, r_{ij})).
\end{equation} 

In implementation, GNN-LF has the following form.
\begin{equation}
\chi(\{(s_j, \vec r_{ij}, \vec E_j)| r_{ij}<r_c\})=\rho(\sum_{r_{ij}<r_c} \psi(\text{Concatenate}(P_{\vec E_i}(\vec r_{ij}), P_{\vec E_i}(\vec E_{j}), s_j, r_{ij})).
\end{equation}

Therefore, for all input molecules, GNN-LF is at least as expressive as SchNet.
\begin{theorem}
$\forall L\in \sN^+$, for all $L$ layer SchNet, there exists a $L$ layer GNN-LF produce the same output for all input molecule.
\end{theorem}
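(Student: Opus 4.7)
The plan is to argue by induction on the layer index $L$, showing that any SchNet network can be simulated layer-by-layer by a GNN-LF network of the same depth. The key observation is structural: the two message functions have identical outer form (sum-aggregation of $\varphi$ applied to neighbor features, followed by $\rho$), and GNN-LF's inner function $\varphi$ simply takes a \emph{superset} of the inputs that SchNet's $\varphi$ consumes. So simulation amounts to ignoring the extra inputs.

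First, I would fix an arbitrary $L$-layer SchNet, with per-layer functions $\rho_l^{\text{Sch}}$ and $\varphi_l^{\text{Sch}}$ for $l = 1, \dots, L$, together with its initial embedding and output module. Then I would construct a GNN-LF with the same depth whose per-layer $\rho_l$ equals $\rho_l^{\text{Sch}}$, whose per-layer $\varphi_l$ is defined by
\begin{equation}
\varphi_l(\text{Concatenate}(P_{\vec E_i}(\vec r_{ij}), P_{\vec E_i}(\vec E_{j}), s_j, r_{ij})) := \varphi_l^{\text{Sch}}(\text{Concatenate}(s_j, r_{ij})),
\end{equation}
i.e., $\varphi_l$ projects out the $s_j$ and $r_{ij}$ coordinates of its input and discards the frame-based entries. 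I would match the initial atom embedding (using the same learned embedding of $z_i$ and switching off the neighborhood embedding term if necessary) and reuse the SchNet output module verbatim.

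The core step is an induction on $l$ showing that for every atom $i$ and every input molecule, the $l$-th layer representation produced by the constructed GNN-LF equals that produced by the given SchNet. The base case holds by matching initializations. For the inductive step, both models aggregate the same multiset of neighbor indices (same cutoff $r_c$) and, by the inductive hypothesis, have identical $s_j^{(l-1)}$; applying our chosen $\varphi_l$ strips away the frame information and reduces the summand to SchNet's summand, after which the identical $\rho_l$ yields identical updated embeddings. The output then matches because the output module is the same.

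The only real obstacle is a matter of bookkeeping rather than substance: GNN-LF as actually implemented bakes in specific architectural choices (filter decomposition, shared filters across layers, the neighborhood embedding block, the edge-weight rescaling by $w^{(e)}_{ij}$, etc.), so one must verify these choices do not prevent the constructed $\varphi_l$ and $\rho_l$ from being realized. I would handle this by noting that the theorem concerns the abstract class of functions GNN-LF can represent (the message-passing scheme displayed just above the theorem), for which arbitrary $\varphi_l, \rho_l$ are admissible; the additional implementation restrictions are orthogonal to the expressiveness claim. If one wants the statement within the concrete architecture, one observes that the cutoff factor $w^{(e)}_{ij}$ is itself a function of $r_{ij}$ and can be absorbed into $\varphi_l^{\text{Sch}}$, that filter sharing can be undone by indexing the shared filter on the layer index via the input features, and that the neighborhood embedding term can be nulled out by choosing its embedding table to be zero.
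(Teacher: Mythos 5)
Your construction is exactly the paper's: set $\rho_l$ equal to SchNet's $\rho'_l$ and define $\varphi_l$ to discard the frame-projection inputs and apply SchNet's $\varphi'_l$ to the remaining $(s_j, r_{ij})$ coordinates, so the two networks coincide layer by layer. Your added induction and the remarks on absorbing the cutoff factor and implementation details are just more explicit bookkeeping than the paper bothers with; the argument is the same.
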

\begin{proof}
Let the following equation denote the $l^{\text{th}}$ layer SchNet.
\begin{equation}
\chi'_l(\{(s_j^{(l)}, \vec r_{ij}, \vec E_j)| r_{ij}<r_c\})=\rho'_l(\sum_{r_{ij}<r_c} \psi'_l(\text{Concatenate}(s_j^{(l-1)}, r_{ij})).
\end{equation} 
Let $\rho_l=\rho'_l$, $\psi_l(\text{Concatenate}(P_{\vec E_i}(\vec r_{ij}), P_{\vec E_i}(\vec E_{j}), s_j, r_{ij}))=\psi'_l(\text{Concatenate}(s_j, r_{ij}))$, which neglects the projection input. 

Let the $l^{\text{th}}$ layer of GNN-LF have the following form.
\begin{equation}
\chi_l(\{(s_j^{(l)}, \vec r_{ij}, \vec E_j)| r_{ij}<r_c\})=\rho_l(\sum_{r_{ij}<r_c} \psi_l(\text{Concatenate}(s_j^{(l-1)}, r_{ij})).
\end{equation} 

This GNN-LF produces the same output as the SchNet.

\end{proof}

\section{How to overcome the frame degeneration problem.}\label{app::relpool}

As shown in Theorem~\ref{thm::best_agg}, if the frame is $\text{O}(3)$-equivariant, no matter what scheme is used, the frame will degenerate when the input molecule is symmetric. In other words, the projection degeneration problem roots in the symmetry of molecule. Therefore, we try to break the symmetry by assigning node identity features $s'$ to atoms. The $i^{\text{th}}$ row of $s'$ is $i$. We concatenate $s$ and $s'$ as the new node feature $\tilde s\in \sR^{N\times (F+1)}$. Let $\eta$ denote a function concatenating node feature $s$ and node identity features $s'$, $\eta(s)=\tilde s$. Its inverse function removes the node identity $\eta^{-1}(\tilde s)=s$.

In this section, we assume that the cutoff radius is large enough so that local environments cover the whole molecule. Let $[n]$ denote the sequence $1,2,..., n$. Let $s\in \sR^{N\times F}$ denote the invariant atomic features, $\vec r\in \sR^{N\times 3}$ denote the 3D coordinates of atoms, and $\vec r_i$, the $i^{\text{th}}$ row of $\vec r$, denote the coordinates of atom $i$. Let $\vec r-\vec r_i$ denote an $N\times 3$ matrix whose $j^{\text{th}}$ row is $\vec r_j-\vec r_i$. We assume that $N>1$ throughout this section.

Now each atom in the molecule has a different feature. The frame generation is much simpler now.

\begin{proposition}\label{prop::selframe}
With node identity features, there exists an O(3)-equivariant function mapping the local environment $LE_i=\{(\tilde s_i, \vec r_{ij})|j\in [N]\}$ to a frame $\vec E_i\in \sR^{3\times 3}$, and the first $\text{rank}(\vec E_i)$ rows of $\vec E_i$ form an orthonormal basis of $\text{span}(\{\vec r_{ij}|j\in [N]\})$ while other rows are zero.
\end{proposition}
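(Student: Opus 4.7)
The plan is to construct $\vec E_i$ via an ordered Gram--Schmidt process on the relative position vectors, using the node identity features to supply a canonical ordering of neighbors that is preserved under $\text{O}(3)$ actions on coordinates. This is precisely the missing ingredient in Theorem~\ref{prop::best_agg}: there the lack of symmetry breaking forced a measure-zero degeneracy condition, whereas identities break all atom-permutation symmetries for free.

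First I would enumerate the neighbors in the fixed order $j=1,2,\ldots,N$ (skipping $j=i$), producing the ordered sequence $\vec r_{i1},\vec r_{i2},\ldots,\vec r_{iN}$. The ordering is a function of the identity features $s'$ only, not of coordinates, so any $o\in\text{O}(3)$ acts on the sequence elementwise via $\vec r_{ij}\mapsto \vec r_{ij}o^T$ without reordering. Then I would run classical Gram--Schmidt with a skip-if-dependent rule: maintain a growing orthonormal list $e_1,\ldots,e_k$, and for each $\vec r_{ij}$ in order set $\vec v=\vec r_{ij}-\sum_{l=1}^{k}\langle \vec r_{ij},e_l\rangle e_l$; if $\|\vec v\|>0$ and $k<3$, append $e_{k+1}=\vec v/\|\vec v\|$ and increment $k$. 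Finally assemble $\vec E_i$ with $e_1,\ldots,e_k$ as its first $k$ rows and zero rows afterward.

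The two verifications are then routine. For $\text{O}(3)$-equivariance, note that inner products, norms, and linear combinations all commute with right-multiplication by $o\in\text{O}(3)$; an easy induction on $k$ shows that if the inputs are transformed by $o$, then every accepted basis vector is transformed as $e_l\mapsto e_l o^T$, and the skip condition $\|\vec v\|>0$ is preserved since $o$ is an isometry, so $\vec E_i(z,\vec r o^T)=\vec E_i(z,\vec r)o^T$. For the basis property, each appended $e_l$ is by construction a unit vector in $\text{span}(\{\vec r_{ij}\}_{j\in[N]})$ and is orthogonal to the previously appended vectors; a standard Gram--Schmidt induction shows that at termination the collected $e_1,\ldots,e_k$ form an orthonormal basis of this span and that $k=\dim\text{span}(\{\vec r_{ij}\}_{j\in[N]})=\text{rank}(\vec E_i)$.

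The main subtlety I expect is the discontinuous branch $\|\vec v\|>0$, but unlike in Theorem~\ref{prop::best_agg} no coordinate-dependent tie-breaking is required: node identities already break every permutation ambiguity, so the branch is well-defined on the entire configuration space (with rank drops confined to a measure-zero set, mirroring the situation of Theorem~\ref{prop::best_agg}). The remaining write-up is essentially bookkeeping, expressing the procedure as a composition of the permutation-canonical read-out $\gamma$ and an equivariant recursive aggregator, to make the O$(3)$-equivariance fit formally into the scalar/vector framework of Definition~\ref{def::scalar_vector}.
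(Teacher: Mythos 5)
Your proposal is correct and matches the paper's own argument: both use the canonical neighbor ordering supplied by the node identity features to greedily select a maximal linearly independent subsequence of the $\vec r_{ij}$, orthonormalize it by Gram--Schmidt, zero-fill to $3\times 3$, and observe that the selection and orthonormalization commute with right-multiplication by $o\in\text{O}(3)$. Your inline skip-if-dependent formulation is equivalent to the paper's select-then-orthonormalize procedure, so there is nothing substantive to add.
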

\begin{proof}
For node $i$, we can use the following procedure to produce a frame.

Initialize $\vec E_i$ as an empty matrix. For j in $[1, 2, 3,...,n]$, if $\vec r_{ij}$ is linearly independent to row vectors in $\vec E_i$, add $\vec r_{ij}$ as a row vector of $\vec E_i$.

Therefore, when the procedure finishes, the row vectors of $\vec E$ form a maximal linearly independent system of $\{\vec r_{ij}|j\in [N]\}$. 

Then, we use the Gram-Schmidt process to orthonormalize the non-empty row vectors in $\vec E$, and then use zero to fill the empty rows in $\vec E$ to form a ${3\times 3}$ matrix. Therefore, the first $\text{rank}(\vec E_i)$ rows of $\vec E_i$ are orthonormal, and can linearly express all vector in $\{\vec r_{ij}|j\in [N]\}$. In other words, the first $\text{rank}(\vec E_i)$ rows of $\vec E_i$ form an orthonormal basis of $\text{span}(\{\vec r_{ij}|j\in [N]\})$.

Note that $\vec r_{ij}$, $\vec 0$ are $\text{O}(3)$-equivariant vectors. Therefore, the frame produced with this scheme is $\text{O}(3)$-equivariant. 
\end{proof}

With the frame, GNN-LF has the universal approximation property.

\begin{proposition}\label{prop::univ_GNNLFwithnodeid}
Assuming that the node identity features are used, and the frame is produced by the method in Proposition~\ref{prop::selframe}. For all $\text{O}(3)$-invariant and translation-invariant functions $f(s, \vec r)$, $f$ can be written as a function of the embeddings of node $1$ produced by one message passing layer proposed in Theorem~\ref{thr::encode_localenv}.
\end{proposition}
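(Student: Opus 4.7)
The plan is to reduce the proposition to the claim that if two labelled molecules $(s,\vec r)$ and $(s',\vec r')$ produce the same one-layer embedding of node~$1$, then they differ only by a translation and an orthogonal transformation; the conclusion then follows from the assumed $\mathrm{O}(3)$- and translation-invariance of $f$. First I would apply Proposition~\ref{prop::encode_localenv} to the node-identity-augmented scalar features $\tilde s_j=\mathrm{Concatenate}(s_j,j)$. Because the identity component is distinct for every $j$, the injectivity of $\phi$ on the \emph{multiset} $\{(\tilde s_j,P_{\vec E_1}(\vec r_{1j}))\}$ upgrades to injectivity on the entire \emph{sequence} $(s_j, P_{\vec E_1}(\vec r_{1j}))_{j\in[N]}$. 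Hence equal node-$1$ embeddings force $s=s'$ and $P_{\vec E_1}(\vec r_{1j})=P_{\vec E'_1}(\vec r'_{1j})$ for every $j$, where $\vec E_1,\vec E'_1$ are the respective frames.

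Second I would exploit the structure of the frame from Proposition~\ref{prop::selframe}. Let $V:=\mathrm{span}\{\vec r_{1j}\mid j\in[N]\}$ and $k:=\dim V$. The first $k$ rows of $\vec E_1$ are an orthonormal basis of $V$ and the last $3-k$ rows vanish, so the last $3-k$ columns of each projection $P_{\vec E_1}(\vec r_{1j})$ are identically zero, while every one of the first $k$ columns is nonzero for at least one $j$ (otherwise some basis vector would be orthogonal to all of $V$, contradicting its membership in $V$). This zero-pattern is determined by the projections alone, so the corresponding rank $k'$ for the primed molecule satisfies $k'=k$; on the first $k$ columns the projection vectors then agree entry-wise.

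Third I would build an explicit $Q\in\mathrm{O}(3)$ with $\vec r'_{1j}=\vec r_{1j}Q^{T}$. Extend $\vec E_1$ and $\vec E'_1$ to orthogonal matrices $E_1^{+},E'^{+}_1\in\mathrm{O}(3)$ by replacing their vanishing rows with an orthonormal basis of $V^{\perp}$ and $V'^{\perp}$ respectively, and set $Q:=(E_1^{+})^{T}E'^{+}_1$. Writing $\vec r_{1j}=\vec c_j\,E_1^{+}$ with $\vec c_j$ supported on the first $k$ coordinates (and similarly $\vec r'_{1j}=\vec c'_j\,E'^{+}_1$), the projection equality from Step~1 yields $\vec c_j=\vec c'_j$, and a direct computation gives $\vec r_{1j}Q=\vec r'_{1j}$. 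Translation invariance lets us replace $\vec r,\vec r'$ by their relative-position matrices (with node $1$ at the origin), and $\mathrm{O}(3)$-invariance then transports one to the other via $Q^{T}$, yielding $f(s,\vec r)=f(s',\vec r')$.

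The main obstacle is the rank-deficient case $k<3$: matching projections $P_{\vec E_1}(\vec r_{1j})=P_{\vec E'_1}(\vec r'_{1j})$ only gives a partial isometry between the two $k$-dimensional subspaces $V$ and $V'$, and the naive candidate $E_1^{T}E'_1$ has rank $k$ and lies outside $\mathrm{O}(3)$. The orthogonal-completion trick above is what lifts this partial isometry to a genuine element of $\mathrm{O}(3)$ on all of $\sR^{3}$, and the verification that the appended rows do not perturb $\vec r_{1j}Q$ uses precisely that $\vec r_{1j}\in V$. Identifying $k$ from the zero-pattern likewise relies on the fact that each orthonormal basis row lives in $V$ and hence must pair nontrivially with some $\vec r_{1j}$; these two observations are the technical heart of the argument.
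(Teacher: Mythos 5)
Your proof is correct, and it reaches the conclusion by a genuinely different final step than the paper. You argue by \emph{orbit separation}: you show that two labelled configurations with equal node-$1$ embeddings must satisfy $s=s'$ and be related by a translation and an explicitly constructed $Q=(E_1^{+})^{T}E'^{+}_1\in\text{O}(3)$, so any invariant $f$ factors through $z_1$. The paper instead never compares two configurations: it packages the data as the pair $(\vec E_1,z_1)$, observes that $f$ viewed as a function of this pair is invariant under $\vec E_1\mapsto \vec E_1 o^T$, \emph{canonicalizes} by right-multiplying with an orthogonal completion of $\vec E_1$ so that the frame argument collapses to $eye_{\text{rank}(\vec E_1)}$, and finally recovers $\text{rank}(\vec E_1)$ from $z_1$ itself. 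Both routes rest on the same two technical facts --- that $P_{\vec E_1}$ is invertible on $\text{span}\{\vec r_{1j}\}$ because the nonzero rows of $\vec E_1$ are an orthonormal basis of that span, and that node identities upgrade multiset injectivity to sequence injectivity --- and your observation that $\text{rank}(\vec E_1)$ is read off from the zero/nonzero column pattern of the projections plays exactly the role of the paper's $\text{rank}(\vec E_1)=\text{rank}(\iota\circ\phi^{-1}(z_1))$ step. Your version is arguably the more standard ``encoder separates orbits, hence invariants factor through it'' argument and handles the rank-deficient case transparently via the orthogonal completion; the paper's version avoids ever exhibiting the relating transformation. One cosmetic slip: you announce $\vec r'_{1j}=\vec r_{1j}Q^{T}$ but then verify $\vec r_{1j}Q=\vec r'_{1j}$; either convention works since $Q\in\text{O}(3)$, but the two statements should be made consistent.
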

\begin{proof}
Let $e_{r}\in \sR^{3\times 3}$ denote a diagonal matrix whose first $r$ diagonal elements are $1$ and others are $0$. 

With node identity features and method in Proposition~\ref{prop::selframe}, the first $\text{rank}(\vec E_i)$ rows of $\vec E_i$ form an orthonormal basis of $\text{span}(\{\vec r_{ij}|j\in [N]\})$ while other rows are zero. Especial, all vectors in $\{\vec r_{1j}|j\in [N]\}$ can be written as linear combination of rows in $\vec E_1$, $\vec r_{1j}=w_{1j}\vec E_{j}$. Therefore, the projection operation $P_{\vec E_1}:\{\vec r_{1j}|j\in [N]\}\to \{(\vec r_{1j})\vec E_1^T|j\in [N]\}$ is injective, as 
\begin{equation}
    P_{\vec E_1}(\vec r_{1j})\vec E_1=w_{1j}\vec E_{1}\vec E_{1}^T\vec E_1=w_{1j}e_{\text{rank}(\vec E_1)}\vec E_1=w_{1j}\vec E_1=\vec r_{1j}.
\end{equation}

According to the proof of Theorem~\ref{thr::encode_localenv}, there exists injective function $\chi$ so that node embeddings $z_1=\chi(\{\tilde s_j, P_{\vec E_1}(\vec r_{1j})|j\in [N]\})$. Note that both $\vec E_1$ and $z_1$ are functions of $LE_1$. 

Let $\tau$ denote a function $(z_1, \vec E_1)=\tau(\{(\tilde s_j, \vec r_{1j})|j\in [N]\})$, so that
\begin{equation}
    \forall o\in \text{O}(3), (z_1, \vec E_1o^T)=\tau(\{(\tilde s_j, \vec r_{1j}o^T)|j\in [N]\}).    
\end{equation}
Moreover, $\tau$ is also an invertible function because 
\begin{equation}
    \{(\tilde s_j, \vec r_{1j})|j\in [N]\}=\{(s, p \vec E_1)| (s, p) \in \chi^{-1}(z_1)\}.
\end{equation}

Because the last column of $\tilde s$ is the node identity feature, there exists an bijective function $\varphi$ converting set of features to matrix of features. 
\begin{equation}
\varphi(\{(\tilde s_j, \vec r_{1j})|j\in [N]\})=(s, -(\vec r-\vec r_1)).    
\end{equation}
Intuitively, it puts the features of atom with node identity $i$ to the $i^{\text{th}}$ row of feature matrix. Similarly, 
\begin{equation}
    \varphi'(\{(\tilde s_j, P_{\vec E_1}(\vec r_{1j}))|j\in [N]\})=(s, -(\vec r-\vec r_1)\vec E_1^T).    
\end{equation}

As $f$ is a translation- and O(3)-invariant function, 
\begin{equation}
f(s, \vec r)=f(s, \vec r-\vec r_1)=f(s, -(\vec r-\vec r_1))=f(\varphi(\{(\tilde s_j, \vec r_{1j})|j\in [N]\})).    
\end{equation}

Let $g=f\circ\varphi\circ \tau^{-1}$. So 
\begin{equation}
    g(\vec E_1, z_1)=f(s, \vec r),
\end{equation}
\begin{equation}
    \forall o\in \text{O}(3), g(\vec E_1o^T, z_1)=f(s, \vec ro^T)=f(s, \vec r).
\end{equation}

Let $\text{extend}(E_i)\in O(3)$ denote any matrix whose first $\text{rank}(E_i)$ rows equals to $E_i$'s first rows. Therefore, 
\begin{equation}
    f(s, \vec r)= f(s, \vec r\text{extend}(\vec E_1)^T)= g(\vec E_1\text{extend}(\vec E_1)^T, z_1)=g(e_{\text{rank}(\vec E_1)}, z_1)= g'(\text{rank}(\vec E_1), z_1).
\end{equation}

Note that $\text{rank}(\vec E_1)=\text{rank}(\vec r-\vec r_1)=\text{rank}(P_{\vec E_1}(\vec r-\vec r_1))=\text{rank}(\iota\circ\varphi'\circ \chi^{-1}(z_i))$, where $\iota$ is a selection function: $\iota(z, -(\vec r-\vec r_0)\vec E_1^T)=-(\vec r-\vec r_0)\vec E_1^T$. Therefore, $f(s, \vec r)=g'(\text{rank}(\iota\circ \chi^{-1}(z_1)), z_1)=g''(z_1)$.
\end{proof}

For simplicity, let function $\psi$ denote GNN-LF with node identity features (including adding node identity feature, generating frame, and a message passing layer proposed in Theorem~\ref{thr::encode_localenv}), $\psi(z, \vec r)$ is the embeddings of node $1$.

Node identity features help avoiding expressivity loss caused by frame degeneration. However, GNN-LF's output is no longer permutation invariant. Therefore, we use the relational pooling method~\citep{FrameAveraging}, which introduces extra computation overhead and keeps the permutation invariance. 

To illustrate this method, we first define some concepts. Function $\pi:[n]\to [n]$ is a permutation iff it is bijective. All permutation on $[n]$ forms the permutation group $S_n$. We compute the output of all possible atom permutations and average them, in order to keep permutation invariance. We define the permutation of matrix here: for all matrix of shape $N\times m$, $\forall \pi \in S_N$, the $i^{\text{th}}$ row of $\pi(M)$ equals to the $(\pi^{-1}(i))^{\text{th}}$ row of $M$.

\begin{proposition}
For all $\text{O}(3)$-invariant, permutation-invariant and translation invariant function $f(s, \vec r)$, there exists GNN-LF $\psi$ and some function $g$, with which $\frac{1}{N!}\sum_{\pi\in S_N}g(\psi(\pi(s), \pi (\vec r)))$ is permutation invariant and equals to $f(s, \vec r)$. 
\end{proposition}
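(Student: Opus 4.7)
The plan is to combine the preceding universal-approximation result for GNN-LF with node identities (Proposition~\ref{prop::univ_GNNLFwithnodeid}) with the standard relational-pooling symmetrization. The conceptual split is clean: the previous proposition handles $\text{O}(3)$-invariance and translation-invariance by providing a $g$ with $g(\varphi(s,\vec r)) = f(s,\vec r)$, and then averaging over $S_N$ provides permutation invariance for free. Crucially, because $f$ is assumed permutation-invariant, every term in the average already equals $f(s,\vec r)$, so the symmetrization does not distort the target value.

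First, I would invoke Proposition~\ref{prop::univ_GNNLFwithnodeid} to pick $g$ such that $g(\varphi(s,\vec r)) = f(s,\vec r)$; note that proposition needs only $\text{O}(3)$- and translation-invariance of $f$, so the permutation-invariance hypothesis on $f$ is not spent here. Next, for any $\pi \in S_N$, applying the same identity to the relabeled input $(\pi(s), \pi(\vec r))$ gives $g(\varphi(\pi(s), \pi(\vec r))) = f(\pi(s), \pi(\vec r))$, and since $f$ is permutation-invariant the right-hand side equals $f(s,\vec r)$. Averaging over all $\pi$ therefore yields $f(s,\vec r)$ exactly. For permutation invariance of the averaged expression, I would use the standard group-sum argument: for any $\sigma \in S_N$, the substitution $\pi' = \pi \sigma$ reindexes the sum without changing its value, giving
\[
\frac{1}{N!}\sum_{\pi\in S_N} g(\varphi(\pi\sigma(s), \pi\sigma(\vec r)))
= \frac{1}{N!}\sum_{\pi'\in S_N} g(\varphi(\pi'(s), \pi'(\vec r))),
\]
so the average is $S_N$-invariant as a function of $(s,\vec r)$ without even invoking permutation-invariance of $f$.

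I do not expect a real obstacle, since the hard expressive-power work is already absorbed into Proposition~\ref{prop::univ_GNNLFwithnodeid}. The only convention to nail down carefully is that $\varphi$ is understood to prepend node identities \emph{after} the input permutation is applied, i.e., identities are assigned according to current row order rather than carried along with the atoms. Once this is made explicit, the argument is essentially the two displays above and the whole proof fits in a few lines.
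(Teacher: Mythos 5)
Your proof is correct, and it reaches the same conclusion by a more self-contained route than the paper. The paper treats the symmetrization as an instance of the Frame Averaging machinery: it defines the trivial frame $F(v)=S_N$ and then cites Theorem~2 of that work for permutation invariance and Theorem~4 for the preservation of expressive power, with Proposition~\ref{prop::univ_GNNLFwithnodeid} supplying the hypothesis of Theorem~4. You instead observe that Proposition~\ref{prop::univ_GNNLFwithnodeid} gives an \emph{exact} identity $g(\varphi(s,\vec r))=f(s,\vec r)$ valid for every input, so applying it to each relabeled input and using permutation invariance of $f$ makes every summand equal to $f(s,\vec r)$; the average is then trivially equal to $f$, and the reindexing $\pi'=\pi\sigma$ gives permutation invariance of the construction directly. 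This avoids importing the two external theorems and makes transparent why nothing is lost in the averaging (exactness of the representation, not just approximation, is what makes it work). Your explicit remark that node identities are assigned by current row order after the permutation is applied is exactly the convention the paper uses ($\pi(M)_i = M_{\pi^{-1}(i)}$ with $s'_i = i$ fixed), and it is the one point where the argument could otherwise go wrong, so it is good that you pinned it down. The only thing the paper's heavier route buys is uniformity with the cited framework; mathematically your elementary argument suffices.
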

\begin{proof}

Define a "frame" (defined in Definition 1 in~\citep{FrameAveraging}) $F: V\to 2^{S_n}$, where $V$ is the embedding space. $\forall v\in V$, $F(v)=S_n$. So the relational pooling of GNN-LF with node identity features $\langle g\circ \psi\rangle_F(s, \vec r)=\frac{1}{N!}\sum_{\pi\in S_N}g(\psi(\pi(s)), \pi (\vec r))$. Note that the permutation operation $\pi$ and O(3) operation $o$ commute: $\pi(\vec ro^T)=\pi(\vec r)o^T$. According to Theorem 2 in~\citep{FrameAveraging}, $\langle g\circ \psi\rangle_F$ is permutation invariant.

According to Theorem 4 in~\citep{FrameAveraging}, if there exist function $g'$ and $\psi'$ so that $g'\circ\psi'= f$ (the existence is shown in Proposition~\ref{prop::univ_GNNLFwithnodeid}), there will also exist GNN-LF $\psi$ and function $g$, so that $\langle g\circ \psi\rangle_F(s, \vec r)=f(s, \vec r)$.
\end{proof}

Therefore, we can completely solve the frame degeneration problem with the relational pooling trick and node identity features. However, the time complexity is up to $O(N!N^2)$, so we only analyze this method theoretically.

\section{Why is global frame more likely to degenerate than local frame?}\label{app::globalframe}
Let $[N]$ denote the sequence $1,2,...,N$. $N$ is the number of atoms in the molecule. 

We first consider when local frame degenerates. As shown in Theorem~\ref{thm::best_agg}, the degeneration happens if and only if the local environment is symmetric under some orthogonal transformations. 
\begin{equation}
    \text{rank}(\vec E_i)<3\Leftrightarrow \exists o\in \text{O}(3),o\neq I, \{(s_i, \vec r_{ij}o^T)|r_{ij}<r_c\}=\{(s_i, \vec r_{ij})|r_{ij}<r_c\}.
\end{equation}

The global frame has the following form, 
\begin{equation}
    \vec E=\sum_{i=1}^N \vec E_i.
\end{equation}

We first prove some properties of $\vec E$ function.
\begin{proposition}
$\vec E$ is an $\text{O}(3)$-equivariant, translation-invariant, and permutation-invariant function.
\end{proposition}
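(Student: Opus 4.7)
The plan is to reduce each of the three properties to the corresponding property of the local frame $\vec E_i$ (already established by Theorem~\ref{prop::best_agg} and the construction in Section~\ref{sec::buildframe}), and then observe that the sum $\sum_i \vec E_i$ preserves all three properties because the relevant group actions commute with summation.

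For $\text{O}(3)$-equivariance, I would appeal to Theorem~\ref{prop::best_agg}, which says each $\vec E_i$ is $\text{O}(3)$-equivariant under the right-multiplication action $\vec E_i \mapsto \vec E_i o^T$. Since this action is linear and summation is linear, I would factor $o^T$ out of the sum: $\vec E(z,\vec r o^T) = \sum_i \vec E_i(z,\vec r) o^T = \vec E(z,\vec r) o^T$. For translation invariance, I would observe that each local environment $\text{LE}_i = \{(s_j, \vec r_{ij}) \mid r_{ij} < r_c\}$ is built from relative positions $\vec r_{ij} = \vec r_i - \vec r_j$ and scalar features $s_j$, both of which are manifestly invariant under a common shift of all atomic coordinates; hence each $\vec E_i$ is translation-invariant and so is the sum.

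For permutation invariance, the key observation is that local environments are defined as \emph{sets} of (scalar, relative-position) pairs, so their value does not depend on atom labels. Under a permutation $\pi \in S_N$ of atoms, the local environment of the atom now labeled $i$ equals, as a set, the local environment of the atom previously labeled $\pi^{-1}(i)$. Consequently $\vec E_i$ after permutation equals $\vec E_{\pi^{-1}(i)}$ before permutation, and re-indexing the global sum by $j = \pi^{-1}(i)$ yields $\sum_i \vec E_{\pi^{-1}(i)} = \sum_j \vec E_j$, finishing the argument.

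No step is expected to be a genuine obstacle; the proposition is essentially bookkeeping. The only subtlety worth highlighting in the write-up is distinguishing between a permutation acting on atom labels (which merely rearranges which local environment is indexed by which $i$, leaving the multiset of $\vec E_i$'s unchanged) and an $\text{O}(3)$ action (which rotates each $\vec E_i$ individually), so that the reader sees why these two invariances coexist without interfering.
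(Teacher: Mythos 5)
Your proposal is correct and follows essentially the same route as the paper's proof: each property is reduced to the corresponding property of the local frames $\vec E_i$ (equivariance factored out of the sum by linearity, translation invariance from the use of relative positions, and permutation invariance by re-indexing the sum). No meaningful differences to note.
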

\begin{proof}
$\text{O}(3)$-equivariance: $\forall o\in \text{O(3)}$, $\vec E_i(s, \vec ro^T)=\vec E_i(s, \vec r)o^T$. Therefore,
\begin{equation}
\vec E(s, \vec ro^T)=\sum_{i=1}^N \vec E_i(s, \vec ro^T)=(\sum_{i=1}^N \vec E_i(s, \vec r))o^T=\vec E(s, \vec r)o^T.
\end{equation}

Translation-invariance: For all translation $\vec t\in \sR^{3}$, let $\vec r+\vec t$ denote a matrix of shape $N\times 3$ whose $i^{\text{th}}$ row is $\vec r_i+\vec t$. As $\vec E_i$ is a function of $\vec r_i-\vec r_j=\vec r_i+\vec t-(\vec r_j+\vec t)$, $\vec E_i(z, \vec r+t)=\vec E_i(z, \vec r)$. Therefore, 
\begin{equation}
\vec E(s, \vec r+t)=\sum_{i=1}^N \vec E_i(s, \vec r+t)=\sum_{i=1}^N \vec E_i(s, \vec r)=\vec E(s, \vec r).
\end{equation}

Permutation-invariance: for all permutation $\pi\in S_n$, $\pi(\vec r)_{i}=\pi(\vec r)_{\pi^{-1}(i)}$. 
\begin{align}
\vec E(\pi(s), \pi(\vec r))
    &=\sum_{i=1}^N \vec E_i(\pi(s), \pi(\vec r))\\
    &=\sum_{i=1}^N \vec E_i(\{(\pi(s)_j, \pi(\vec r)_i-\pi(\vec r)_j|~|\pi(\vec r)_i-\pi(\vec r)_j|<r_c\})\\
    &=\sum_{i=1}^N \vec E_i(\{(s_{\pi^{-1}(j)}, \vec r_{{\pi^{-1}(i)}}-\vec r_{\pi^{-1}(j)})|r_{\pi^{-1}(i)\pi^{-1}(j)}<r_c\})\\
    &=\sum_{i=1}^N \vec E_i(\{(s_{j}, \vec r_{i}-\vec r_{j}|r_{ij}<r_c\})\\
    &=\vec E(s, \vec r).
\end{align}
\end{proof}

Then we prove a sufficient condition for global frame degeneration.

\begin{proposition}\label{prop::global_de}
$\text{rank}(\vec E<3)$ if there exists $\vec t\in \sR^3$ and $o\in \text{O}(3), o\neq I$ such that $\{(s_i,\vec r_i-\vec t)|i\in [N]\}=\{(s_i,(\vec r_i-\vec t)o^T)|i\in [N]\}$.
\end{proposition}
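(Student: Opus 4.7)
The plan is to combine the three symmetry properties of the global frame $\vec E$ established just above—$\text{O}(3)$-equivariance, translation-invariance, and permutation-invariance—to derive a nontrivial linear equation $\vec E M = 0$ for some nonzero $3 \times 3$ matrix $M$, from which rank deficiency is immediate.

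First, I would use translation-invariance to center the molecule. Writing $\vec r' = \vec r - \vec 1 \vec t^{\,T}$ for the shifted coordinates, translation-invariance gives $\vec E(s, \vec r) = \vec E(s, \vec r')$, so it suffices to analyze $\vec E(s, \vec r')$. Next, I would convert the set equality $\{(s_i, \vec r'_i)\mid i\in[N]\}=\{(s_i, \vec r'_i o^T)\mid i\in[N]\}$ into a permutation statement: since the two multisets coincide, there is some $\pi\in S_N$ with $s_{\pi(i)} = s_i$ and $\vec r'_{\pi(i)} o^T = \vec r'_i$ for every $i$. Rewriting in matrix form using the convention that $\pi(M)_i = M_{\pi^{-1}(i)}$, this becomes $\pi(s) = s$ and $\pi(\vec r') = \vec r' o^T$.

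Then I would chain the invariances. By permutation-invariance,
\begin{equation}
\vec E(s, \vec r' o^T) = \vec E(\pi(s), \pi(\vec r')) = \vec E(s, \vec r').
\end{equation}
By $\text{O}(3)$-equivariance applied to the transformation $o^T\in\text{O}(3)$,
\begin{equation}
\vec E(s, \vec r' o^T) = \vec E(s, \vec r')\, o^T.
\end{equation}
Combining these two identities yields $\vec E(s, \vec r')(I - o^T) = 0$. Since $o \neq I$, the matrix $I - o^T$ is nonzero and therefore has at least one nonzero column, which lies in the right nullspace of $\vec E(s, \vec r')$. Hence $\vec E(s, \vec r')$ has a nontrivial kernel and $\text{rank}(\vec E) = \text{rank}(\vec E(s,\vec r))<3$.

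The main obstacle is bookkeeping rather than any deep step: I need to be careful about the direction of the permutation (whether it is $\pi$ or $\pi^{-1}$ that appears) and about the side on which $o^T$ acts, so that the equivariance and permutation-invariance identities compose to give $\vec E\,(I-o^T)=0$ on the correct side. Everything else is a direct consequence of the three symmetry properties already proved for $\vec E$.
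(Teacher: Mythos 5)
Your argument is correct and follows essentially the same route as the paper's proof: exploit translation-invariance to center at $\vec t$, permutation-invariance to absorb the relabeling induced by the set equality, and $\text{O}(3)$-equivariance to obtain $\vec E(I-o^T)=0$, then conclude rank deficiency from $o\neq I$. The only cosmetic difference is that you finish by exhibiting a nonzero column of $I-o^T$ in the kernel of $\vec E$, while the paper invokes the rank inequality $\text{rank}(\vec E)+\text{rank}(I-o^T)-3\le 0$; these are equivalent.
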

\begin{proof}
As $\vec E$ is a permutation invariant function, 
\begin{equation}
    \vec E=\vec E(\{(s_i, \vec r_i)|i\in [N]\}).
\end{equation}

As $\vec E$ is a translation-invariant and $\text{O}(3)$-equivariant function, 
\begin{equation}
    \vec E(\{(s_i, (\vec r_i-\vec t)o^T)|i\in [N]\})
    =\vec E(\{(s_i, (\vec r_i-\vec t))|i\in [N]\})o^T\\
    =\vec E(\{(s_i, \vec r_i))|i\in [N]\})o^T.
\end{equation}
Therefore, under the condition $\{(s_i,\vec r_i-\vec t)|i\in [N]\}=\{(s_i,(\vec r_i-\vec t)o^T)|i\in [N]\}$, we have
\begin{align}
    &\vec E(\{(s_i, \vec r_i))|i\in [N]\})o^T= \vec E(\{(s_i, \vec r_i))|i\in [N]\}),\\
    \implies &\vec E(\{(s_i, \vec r_i))|i\in [N]\})(I-o^T)= 0.
\end{align}
Therefore, $\text{rank}(\vec E)+\text{rank}(I-o^T)-3\le 0$. As $I\neq o^T$, $\text{rank}(I-o^T)>0$, $\text{rank}(\vec E)<3$.
\end{proof}

The main difference between the degeneration conditions is the choice of origin. The local frame of atom $i$ degenerates when the molecule is symmetric with atom $i$ as the origin point, while the global frame degenerates if the molecule is symmetric with \textbf{any origin point}. Therefore, the global frame is more likely to degenerate.

\begin{corollary}
Assume the cutoff radius is large enough so that local environments contain all atoms. If there exists $i$, $\text{rank}(\vec E_i)<3$, then $\text{rank}(\vec E)<3$.
\end{corollary}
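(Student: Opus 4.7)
The plan is to chain two results already in the paper: Theorem~\ref{prop::best_agg} (which characterizes when a local frame degenerates in terms of symmetry of the local environment) with Proposition~\ref{prop::global_de} (which gives a sufficient condition for global frame degeneration in terms of symmetry of the molecule about some translation). The natural choice of translation in the latter will be $\vec t = \vec r_i$, where $i$ is the atom whose local frame is degenerate, so the whole argument reduces to observing that ``symmetry of $\text{LE}_i$'' and ``symmetry of the molecule about $\vec r_i$'' coincide once the cutoff is large enough to include every atom.

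More concretely, I would proceed as follows. First, assume $\text{rank}(\vec E_i) < 3$ for some $i$. Invoking Theorem~\ref{prop::best_agg}, pick $o \in \text{O}(3)$, $o \neq I$, such that $o(\text{LE}_i) = \text{LE}_i$, i.e.
\begin{equation}
\{(s_j, \vec r_{ij} o^T) \mid r_{ij} < r_c\} = \{(s_j, \vec r_{ij}) \mid r_{ij} < r_c\}.
\end{equation}
Second, use the hypothesis that $r_c$ is large enough that $\text{LE}_i$ contains every atom to rewrite this as
\begin{equation}
\{(s_j, (\vec r_j - \vec r_i) o^T) \mid j \in [N]\} = \{(s_j, \vec r_j - \vec r_i) \mid j \in [N]\},
\end{equation}
since the entry for $j=i$ contributes $(s_i, \vec 0)$ on both sides regardless of $o$. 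Third, set $\vec t := \vec r_i$; the displayed equation is then exactly the hypothesis of Proposition~\ref{prop::global_de}, which immediately gives $\text{rank}(\vec E) < 3$.

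There is essentially no obstacle here: the statement is a direct composition of the two earlier results. The only small bookkeeping point is the treatment of the center atom itself, namely confirming that including $(s_i,\vec 0)$ in the local environment does not break the equivalence between ``the center atom's environment is $o$-symmetric'' and ``the whole molecule, re-centered at atom $i$, is $o$-symmetric''; this is immediate because $\vec 0 \cdot o^T = \vec 0$. So the argument is a short three-line chain and requires no further machinery.
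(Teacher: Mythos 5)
Your proposal is correct and follows essentially the same route as the paper: extract a nontrivial symmetry $o$ of $\text{LE}_i$ from the degeneracy of $\vec E_i$ via Theorem~\ref{prop::best_agg}, use the large cutoff to promote it to a symmetry of the whole molecule about $\vec t=\vec r_i$, and apply Proposition~\ref{prop::global_de}; the only cosmetic difference is that the paper carries out an explicit sign flip from $\vec r_i-\vec r_j$ to $\vec r_j-\vec r_i$ to match that proposition's conventions, a bookkeeping step you absorb implicitly.
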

\begin{proof}
As $\text{rank}(\vec E_i)<3$, $\exists o\in \text{O}(3), o\neq I, \{(s_j, (\vec r_i-\vec r_j)o^T)|j\in [N]\}=\{(s_j, (\vec r_i-\vec r_j))|j\in [N]\}$.

Therefore, 
\begin{align}
&\{(s_j, -(\vec r_i-\vec r_j)o^T)|j\in [N]\}=\{(s_j, -(\vec r_i-\vec r_j))|j\in [N]\}\\
\implies &\{(s_j, (\vec r_j-\vec r_i)o^T)|j\in [N]\}=\{(s_j, \vec r_j-\vec r_i))|j\in [N]\}
\end{align}
Let $\vec t=\vec r_i$, according to Proposition~\ref{prop::global_de}, $\text{rank}(\vec E)<3$.
\end{proof}

Therefore, when the cutoff radius is large enough, the global frame will also degenerate if some local frame degenerates. 

\section{How does GNN-LF keep O(3)-invariance.}\label{app::keepingequiv}

The input of GNN-LF is atomic numbers $z\in\sZ^N$ and 3D coordinates $\vec r\in \sR^{N\times 3}$, where $N$ is the number of atoms in our molecule. The energy prediction produced by GNN-LF should be O(3)-equivariant. To formalize, $\forall o\in\text{O}(3)$, $\text{GNN-LF}(z, \vec r)=\text{GNN-LF}(z, \vec ro^T)$. For example, when the input molecule rotates, the output of GNN-LF should not change.

We state the Definition~\ref{def::scalar_vector} and Lemma~\ref{lem::operation} here again. 

\begin{definition}
Representation $s$ is called an \textbf{invariant representation} if $s(z, \vec r)=s(z,\vec r o^T), \forall o\in \text{O}(3), z\in \sZ^N, \vec r\in \sR^{N\times 3}$. Representation $\vec v$ is called an \textbf{equivariant representation} if $\vec v(z, \vec r)o^T=\vec v(z, \vec ro^T), \forall o\in \text{O}(3), z\in \sZ^N, \vec r\in \sR^{N\times 3}$.
\end{definition}

\begin{lemma}\label{lem::optrep}
~
%\vspace{-5pt}
\begin{enumerate}[leftmargin=15pt]
\setlength\itemsep{0.5pt}
    \item Any function of invariant representation $s$ will produce an invariant representation.
    \item Let $s\in \sR^F$ denote an invariant representation, $\vec v \in \sR^{F\times 3}$ denote an equivariant representation. We define $s\cdot\vec v \in\sR^{F\times 3}$ as a matrix whose $(i, j)$th element is $s_i\vec v_{ij}$. When $\vec v\in \sR^{1\times 3}$, we first broadcast it along the first dimension. Then the output is also an equivariant representation. 
    \item Let $\vec v\in\sR^{F\times 3}$ denote an equivariant representation. $\vec E\in \sR^{3\times 3}$ denotes an equivariant frame. The projection of $\vec v$ to $\vec E$, denoted as $P_{\vec E}(\vec v):=\vec v\vec E^T$, is an invariant representation in $\sR^{F\times 3}$. For $\vec v$, $P_{\vec E}$ is a bijective function. Its inverse $P_{\vec E}^{-1}$ convert an invariant representation $s\in \sR^{F\times 3}$ to an equivariant representation in $\sR^{F\times 3}$, $P_{\vec E}^{-1}(s)=s\vec E$. %??加入general projection??，不然实现的不变性会有疑问。
    \item Projection of $\vec v$ to a general equivariant representation $\vec v'\in \sR^{F'\times 3}$ can also be defined. It produces an invariant representation in $\sR^{F\times F'}$, $P_{\vec v'}(\vec v)=\vec v\vec v'^T$. 
\end{enumerate}
\end{lemma}

As shown in Figure~\ref{fig::summary}, GNN-LF first generates a frame for each atom and  projects equivariant features of neighbor atoms onto the frame. A graph with only invariant  features is then produced. An ordinary GNN is then used to process the graph and produce the output. We illustate them step by step.

\textbf{Notations.}
The initial node feature of node $i$, $z_i\in \sN$, is an integer atomic number, which neural network cannot process directly. So we first use an embedding layer to transform $z_i$ to float features $s_i=s(z_i)\in \sR^F$, where $F$ is the hidden dimension. According to the first point of Lemma~\ref{lem::optrep}, $s_i$ is an invariant representation.

$\vec r_i\in \sR^{1\times 3}$, the 3D coordinates of atom $i$, is an equivariant representation. $\vec r_{ij}=\vec r_i-\vec r_j\in \sR^{1\times 3}$ is the position of atom $i$ relative to atom $j$. 
\begin{equation}
 \forall o\in \text{O(3)}, \vec r_{ij}(z, \vec ro^T)=\vec r_io^T-\vec r_jo^T=\vec r_{ij}(z, \vec r)o^T,   
\end{equation}
so $\vec r_{ij}$ is an equivariant representation. $r_{ij}$ denotes the distance between atom $i$ and atom $j$. $r_{ij}=\sqrt{\vec r_{ij}\vec r_{ij}^T}\in \sR$. According to the fourth point of Lemma~\ref{lem::optrep}, $\vec r_{ij}\vec r_{ij}^T$ is an invariant representation. According to the first point of Lemma~\ref{lem::optrep}, $r_{ij}=\sqrt{\vec r_{ij}\vec r_{ij}^T}$ is thus an invariant representation. 

\textbf{Frame Generation. }
As shown in Equation~\ref{equ::genframe}, our frame has the following form.
\begin{equation}
\vec E_i=\sum_{j\neq i, r_{ij}<r_c}\frac{w(r_{ij})}{r_{ij}}(f(r_{ij})\odot s_j) \cdot \vec r_{ij},
\end{equation}
where $w(r_{ij})\in \sR$ and $f(r_{ij})\in \sR^F$ denotes two function of $r_{ij}$, $\odot$ denotes Hadamard product. $\frac{w(r_{ij})}{r_{ij}}(f(r_{ij})\odot s_j)$ as a whole is a function of $r_{ij}$ and $s_j$, which are both invariant representations. According to the first point of Lemma~\ref{lem::optrep}, $\frac{w(r_{ij})}{r_{ij}}(f(r_{ij})\odot s_j)$ is an invariant representation $\in \sR^F$. $\cdot$ denotes the scale operation described in the second point of Lemma~\ref{lem::optrep}, so $\frac{w(r_{ij})}{r_{ij}}(f(r_{ij})\odot s_j) \cdot \vec r_{ij}$ is an equivariant representation. The frame of atom $i$, namely $\vec E_i\in \sR^{F\times 3}$, is an equivariant representation, because 
\begin{align}
   \vec E_i(z, \vec ro^T)&=\sum_{j\neq i, r_{ij}<r_c}(\frac{w(r_{ij})}{r_{ij}}(f(r_{ij})\odot s_j) \cdot \vec r_{ij}o^T),\\
    &=(\sum_{j\neq i, r_{ij}<r_c}\frac{w(r_{ij})}{r_{ij}}(f(r_{ij})\odot s_j) \cdot \vec r_{ij})o^T=\vec E_i(z, \vec r)o^T. 
\end{align}

\textbf{Projection. } 
Projection is composed of two parts. As shown in Equation~\ref{equ::dir12} and Equation~\ref{equ::framegeneration}.
\begin{align}
d^{1}_{ij}=\frac{1}{r_{ij}}(\vec r_{ij}\vec E_i^T)
d^{2}_{ij}=\text{diag}(W_1\vec E_j \vec E_{i}^TW_2^T),
\end{align}
where $W_1,W_2\in \sR^{F\times F}$ are two learnable linear layers. According to the fourth point of lemma~\ref{lem::optrep}, $d^{1}_{ij}=\vec r_{ij}\vec E_i^T$ are invariant representations. According to the fourth point of lemma~\ref{lem::optrep}, $\vec E_j\vec E_i^T$ are invariant representations. $d^{2}_{ij}=\text{diag}(W_1\vec E_j \vec E_{i}^TW_2^T)$ is a function of $\vec E_j\vec E_i^T$, so $d^{2}_{ij}$ are invariant representations.

\textbf{Graph Neural Network.}
We use an ordinary GNN to produce the energy prediction. The GNN takes $s_i$ as the input node features and $(r_{ij},d^{1}_{ij},d^{2}_{ij})$ as the input edge features. 
\begin{equation}
\text{GNN-LF}(z, \vec r)=\text{GNN}(\{s_i|i=1,2,..,N\},\{(r_{ij},d^{1}_{ij},d^{2}_{ij})|i=1,2,..,N, j=1,2,..,N\}).  
\end{equation}
As all inputs of $\text{GNN}$ is invariant to $\text{O}(3)$ operation, the energy prediction will also be $\text{O}(3)$-invariant.

\begin{table}[t]
%\vskip -5pt
\centering
\caption{Mean average error on the MD17 dataset. Units: energy ($\mathcal{E}$) (kcal/mol) , forces ($\mathcal{F}$) (kcal/mol/\r{A}).Tuned means GNN-LF with tuned cutoff radius. cf* means GNN-LF with cutoff *\r{A}. Torchmd is the strongest baseline.
}\label{tab::cutoff}
\setlength\tabcolsep{3pt}
\begin{tabular}{cccccccccccccc}
\toprule
&&cf3.5&cf4.5&cf5.5&cf6.5&cf7.5&cf8.5&cf9.5&Tuned&Torchmd\\
\midrule
\multirow{2}{*}{Aspirin} &$\mathcal{E}$&0.1544 &0.9091 &0.1378 &0.1896 &0.1322 &0.1312 &0.1312 &0.1342 &0.1240 \\
&$\mathcal{F}$&0.3092 &1.6694 &0.2164 &0.4170 &0.1896 &0.1954 &0.1954 &0.2018 &0.2550 \\
\multirow{2}{*}{Benzene} &$\mathcal{E}$&0.0686 &0.0701 &0.0696 &0.0689 &0.0690 &0.0694 &0.0697 &0.0686 &0.0560\\ 
&$\mathcal{F}$&0.1559 &0.1490 &0.1624 &0.1489 &0.1496 &0.1492 &0.1489 &0.1506 &0.2010 \\
\multirow{2}{*}{Ethanol} &$\mathcal{E}$&0.0516 &0.0519 &0.0523 &0.0514 &0.0514 &0.0514 &0.0514 &0.0520 &0.0540 \\ 
&$\mathcal{F}$&0.0874 &0.0885 &0.0877 &0.0798 &0.0798 &0.0798 &0.0798 &0.0814 &0.1160 \\
\multirow{2}{*}{Malonaldehyde}&$\mathcal{E}$&0.0772 &0.0780 &0.0784 &0.0744 &0.0744 &0.0747 &0.0747 &0.0764 &0.0790 \\ 
&$\mathcal{F}$&0.1622 &0.1623 &0.1631 &0.1190 &0.1128 &0.1126 &0.1126 &0.1259 &0.1760  \\
\multirow{2}{*}{Naphthalene}&$\mathcal{E}$&0.1153 &0.1153 &0.1590 &0.1148 &0.1124 &0.1124 &0.1124 &0.1136 &0.0850 \\ 
&$\mathcal{F}$&0.0538 &0.0538 &0.1261 &0.0506 &0.0507 &0.0507 &0.0507 &0.0550 &0.0600 \\
\multirow{2}{*}{Salicylic}&$\mathcal{E}$&0.1110 &0.1238 &0.1082 &0.1090 &0.1077 &0.1078 &0.1085 &0.1081 &0.0940 \\
&$\mathcal{F}$&0.1335 &0.1525 &0.1037 &0.1019 &0.1021 &0.1014 &0.1013 &0.1005 &0.1350 \\
\multirow{2}{*}{Toluene} &$\mathcal{E}$&0.0947 &0.1004 &0.1601 &0.0924 &0.0924 &0.0924 &0.0924 &0.0930 &0.0740 \\ 
&$\mathcal{F}$&0.0662 &0.0664 &0.2502 &0.0518 &0.0518 &0.0518 &0.0518 &0.0543 &0.0660 \\
\multirow{2}{*}{Uracil}&$\mathcal{E}$&58.794&0.149&0.1037&0.2334&0.1038&0.1039&0.1037&0.1037 &0.0960 \\
&$\mathcal{F}$&17.0794&0.1106&0.077&0.4496&0.0842&0.0771&0.077&0.0751 &0.0940 \\
\bottomrule
\end{tabular}
%\vspace{-5pt}
\end{table}

\begin{table}[t]
    \centering
\caption{Results on MD17 with different splits. Units: energy ($\mathcal{E}$) (kcal/mol) , forces ($\mathcal{F}$) (kcal/mol/\r{A}).}\label{tab:gemnetsplit}
\begin{tabular}{cccc}
\toprule
Molecule&Target&Our split &DimeNet split\\
\midrule
\multirow{2}{*}{Aspirin}&$\mathcal{E}$&0.1342&0.1294\\
&$\mathcal{F}$&0.2018&0.1902\\
\multirow{2}{*}{Benzene}&$\mathcal{E}$&0.0686&0.0695\\
&$\mathcal{F}$&0.1506&0.1477\\
\multirow{2}{*}{Ethanol}&$\mathcal{E}$&0.052&0.051\\
&$\mathcal{F}$&0.0814&0.078\\
\multirow{2}{*}{Malonaldehyde}&$\mathcal{E}$&0.0764&0.074\\
&$\mathcal{F}$&0.1259&0.1147\\
\multirow{2}{*}{Naphthalene}&$\mathcal{E}$&0.1136&0.1138\\
&$\mathcal{F}$&0.055&0.0493\\
\multirow{2}{*}{Salicylic acid}&$\mathcal{E}$&0.1081&0.1072\\
&$\mathcal{F}$&0.1005&0.097\\
\multirow{2}{*}{Toluene}&$\mathcal{E}$&0.093&0.0914\\
&$\mathcal{F}$&0.0543&0.0499\\
\multirow{2}{*}{Uracil}&$\mathcal{E}$&0.1037&0.1033\\
&$\mathcal{F}$&0.0751&0.0763\\
\bottomrule
\end{tabular}
%\vspace{-5pt}
\end{table}
Our $\text{GNN}$ has an ordinary message passing scheme. The message from atom $j$ to atom $r$ is
\begin{equation}
m_{ij}=f_2(r_{ij},d^{1}_{ij},d^{2}_{ij})\odot s_j,
\end{equation}
where $f'$ is a neural network, whose output $\in \sR^F$. The message combines the features of edge $i, j$ and node $j$. Each message passing layer will update the node feature $s_i$.
\begin{equation}
    s_i\leftarrow{} s_i+g(\sum_{j\in N(i)} m_{ij}),
\end{equation}
where $g$ is a multi-layer perceptron, $N(i)$ is the set of neighbor nodes of node $i$.

After some message passing processes, $s_i$ contains rich graph information. The energy prediction is 
\begin{equation}
    \hat E = h(\sum_{i=1}^N s_i),
\end{equation}
where $h$ is a multi-layer perceptron.
{%\color{red}

\begin{figure}[t]
\centering
%\vspace{-5pt}
\includegraphics[width=\textwidth]{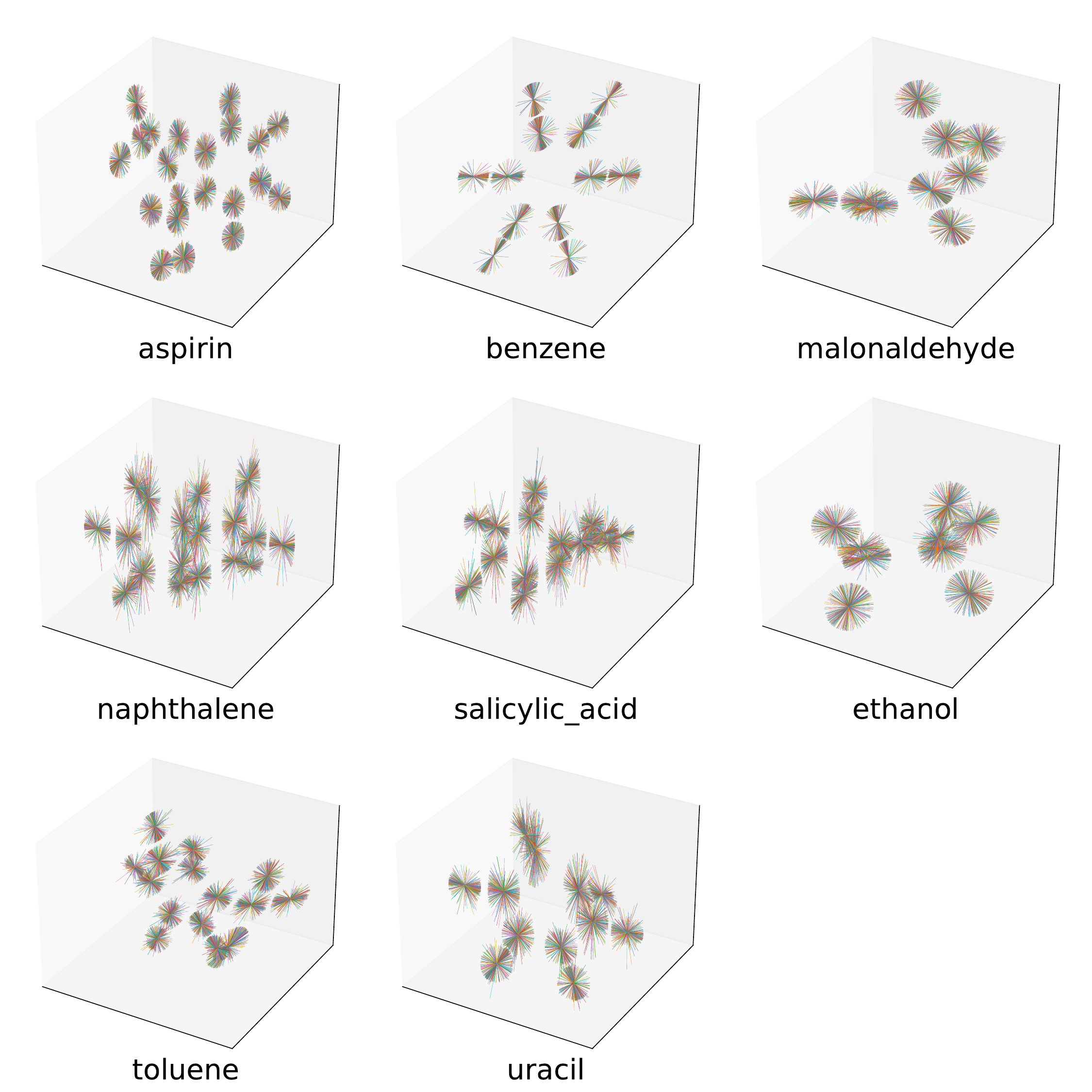}
\caption{Visualization of frames in randomly selected molecules in MD17 dataset. Here frame vectors are represented as lines rooted in atoms.}\label{fig::visframe}
%\vspace{-20pt}
\end{figure}

\section{How cutoff radius affects performance}
Instead of taking a physically motivated cutoff radius, we set it to be a hyperparameter and tune it. Intensive hyperparameter tuning may prohibit GNN-LF from real-world applications. However, we find that GNN-LF is robust to the cutoff radius and does not need a lot of tuning.

As shown in Table~\ref{tab::cutoff}, when the cutoff radius is low, the accuracy is low and unstable. However, when the cutoff radius is large enough, GNN-LF outperforms the strongest baseline torchmd and achieves the performance of GNN-LF with a tuned cutoff radius.

\section{Results with DimeNet split}
Our baselines take slightly different dataset splits. For comparison, we use the same split as our strongest baseline~\citep{EquiTransformer}. It is also the split with the fewest training and validation samples and, thus, the most challenging setting. Other baselines may use slightly larger training and validation datasets. For example, in the MD17 dataset, our split uses $950$ training samples, while DimeNet uses $1000$ training samples. With the split of DimeNet, the performance of GNN-LF increases by $0.5\%$ on average (see Table~\ref{tab:gemnetsplit}). So the differences in dataset split will not hamper our conclusion: GNN-LF achieves state-of-the-art performance in PES tasks.

\section{Gap between GNN-LF's implementation and expressivity analysis}\label{app::implementation_gap}

There are three gaps between implementation and expressivity analysis. We clarify them as follows.

\subsection{Frame ensemble}
The theoretical expressivity can be ensured if the frame contains at least $3$ vectors. So the choice of $F$, the number of vectors in the frame, is mainly for implementation convenience. We set $F$=hidden dimension, so GNN-LF does not need an extra linear layer to change representations’ dimension when producing frames. 

We also do ablation study. Let 1-frame denote GNN-LF with a single frame. Though frame ensemble contributes significant performance gain, GNN-LF with only one frame still achieves the second-best performance among existing models. 

The experimental results in MD17 dataset are shown in Table~\ref{tab::ablframeensemble}. Ablation of frame ensemble leads to $10\%$ test loss increase. However, the performance of 1-frame is still competitive, as 1-frame outperforms all baselines on $3/16$ targets and achieves the second-best performance on $11/16$ targets. The outstanding performance of GNN-LF validates our expressivity analysis.

Though frame ensemble is not vital for performance, we always use it in GNN-LF. As GNN-LF generates frames and projections only once, using frame ensembles will not lead to significant computation overhead. In the setting of Table~\ref{tab::scalability}, both 1-frame and GNN-LF with frame ensemble take 9 ms per inference iteration.

\begin{table}[t]
%\vskip -5pt
\centering
\caption{Results on the MD17 dataset. Units: energy ($\mathcal{E}$) (kcal/mol) and forces ($\mathcal{F}$) (kcal/mol/\r{A}). %We also list the average rank.
}\label{tab::ablframeensemble}
\setlength\tabcolsep{3pt}
\begin{tabular}{cccccccccc}
\toprule
Molecule&Target&1-frame&DimeNet&GemNet&PaiNN&TorchMD&1-frame&GNN-LF\\
\cmidrule(lr){3-7} \cmidrule(lr){8-9}
\midrule
\multirow{2}{*}{Aspirin}&$\mathcal{E}$&\underline{0.1474}&0.204&-&0.1670 &\textbf{0.1240} &0.1474&0.1342\\
&$\mathcal{F}$&0.2784&0.499&\textbf{0.2168}&0.3380 &\underline{0.2550} &0.2784&0.2018\\
\multirow{2}{*}{Benzene}&$\mathcal{E}$&\underline{0.0692}&0.078&-&-&\textbf{0.0560} &0.0692&0.0686\\
&$\mathcal{F}$&\underline{0.1532}&0.187&\textbf{0.1453}&-&0.2010 &0.1532&0.1506\\
\multirow{2}{*}{Ethanol}&$\mathcal{E}$&\textbf{0.0525}&0.064&-&0.0640 &\underline{0.0540} &0.0525&0.0520\\
&$\mathcal{F}$&\underline{0.0897}&0.230&\textbf{0.0853}&0.2240 &0.1160 &0.0897&0.0814\\
\multirow{2}{*}{Malonaldehyde}&$\mathcal{E}$&\textbf{0.0789}&0.104&-&0.0910 &\underline{0.0790} &0.0789&0.0764\\
&$\mathcal{F}$&\underline{0.1651}&0.383&\textbf{0.1545}&0.3190 &0.1760 &0.1651&0.1259\\
\multirow{2}{*}{Naphthalene}&$\mathcal{E}$&\underline{0.1138}&0.122&-&0.1660 &\textbf{0.0850} &0.1138&0.1136\\
&$\mathcal{F}$&\underline{0.0606}&0.215&\textbf{0.0553}&0.0770 &\underline{0.0600} &0.0606&0.0550\\
\multirow{2}{*}{Salicylic acid}&$\mathcal{E}$&\underline{0.1088}&0.134&-&0.1660&0.0940&0.1088&0.1081\\
&$\mathcal{F}$&\underline{0.1290}&0.374&\textbf{0.1268}&0.1950&0.1350&0.1290&0.1005\\
\multirow{2}{*}{Toluene}&$\mathcal{E}$&0.0997&0.102&-&\underline{0.0950}&\textbf{0.0740}&0.0997&0.0939\\
&$\mathcal{F}$&\underline{0.0682}&0.216&\textbf{0.0600}&0.0940&\underline{0.066}&0.0682&0.0543\\
\multirow{2}{*}{Uracil}&$\mathcal{E}$&\underline{0.1048}&0.115&-&0.1060&\textbf{0.096}&0.1048&0.1037\\
&$\mathcal{F}$&\textbf{0.0944}&0.301&\underline{0.0969}&0.1390&\textbf{0.094}&0.0944&0.0751\\
\bottomrule
\end{tabular}
%\vspace{-15pt}
\end{table}

\subsection{Orthogonality Constraint}

Let $f$ denote the target function. In expressivity analysis, we have shown that there exists parameter $w$, GNN-LF with $w$ can approximate $f$, and the frames produced by GNN-LF are orthogonal. Then this conclusion $\Rightarrow$ there exists parameter $w$, GNN-LF with $w$ can approximate $f$

$\Leftrightarrow$ GNN-LF without the constraint can approximate $f$
Therefore, without the constraint, GNN-LF can still maintain expressivity.

In other words, producing orthogonal frames is one particular solution to our task. By building a specific solution, we prove the existence of the solution. In implementation, optimized GNN-LF may produce orthogonal frames, and it can also reach other solutions and thus does not produce orthogonal frames.

As expressivity analysis only considers the existence of a solution and ignores how to reach the solution, it is somehow counter-intuitive. However, the theoretical analysis is still helpful and motivates the designs of GNN-LF.

Moreover, removal of orthogonal constraint does not result in frame degeneration. We visualize local frames of atoms in Figure~\ref{fig::visframe}. In these molecules, frame vector directions are diverse. Therefore, frames are not likely to degenerate, and frames in the same ensemble vary greatly rather than collapse into a single frame. 

\subsection{Inplementation of frame-frame projection}
In theory, frame-frame projection is $\vec E_i\vec E_j^T$. However, in implementation, we use $\text{diag}(W_1\vec E_i\vec E_j^TW_2^T)$ (see Equation~\ref{equ:frame_projection}). This section explains the reason for the difference.

Directly using $\vec E_i\vec E_j^T$ leads to large computation overhead. $\vec E_i\vec E_j^T$ is a matrix $\in \mathbb{R}^{F\times F}$, where $F$ is the hidden dimension, usually $256$. Flattening $\vec E_i\vec E_j^T$ and transforming it to $F$ dimension needs at least a linear layer with 16M parameters (ten times more than the total number of parameters of GNN-LF in MD17 dataset), which is infeasible. Therefore, sampling elements in $\vec E_i\vec E_j^T$ is a must.

Moreover, our sampling method will not hamper expressivity. In theory, frames with $3$ vectors and $3\times 3$ frame-frame projections are enough. Therefore, simply selecting a $3\times 3$ diagonal block in $\vec E\vec E^T$ can fulfill the theoretical requirements. We use a learnable process to simulate this operation. 

Now we explain the sampling method in GNN-LF. Note that we do not directly take the diagonals of $\vec E_i\vec E_j^T$. Instead, we use $\text{diag}(W_1\vec E_i \vec E_j^T W_2^T)$,  where $W_1, W_2\in \mathbb{R}^{F\times F}$ are two learnable matrix used to select elements. Appropriate $W_1, W_2$ can select arbitrary $F$ elements in $\vec E_i\vec E_j^T$. For example, given 
\begin{equation}
    W_1=\sum_{i=1}^F1_{i,a_i}, W_2=\sum_{i=1}^F 1_{i,b_i},
\end{equation}
where $1_{i,j}$ denotes the matrix whose $(i,j)$ elements is 1, other elements are 0,
\begin{equation}
    \text{diag}(W_1\vec E_i\vec E_j^T W_2^T)=[(\vec E_i\vec E_j^T)_{a_ib_i}|i=1,2,..,F].
\end{equation}
}

\end{document}